\DeclareMathAlphabet{\mathcal}{OMS}{cmsy}{m}{n}
\newtheorem{theorem}{Theorem}
\newtheorem{lemma}{Lemma}
\newtheorem{corollary}{Corollary}
\theoremstyle{definition}
\newtheorem{definition}{Definition}
\theoremstyle{remark}
\newtheorem{remark}{Remark}
\newtheorem{example}{Example}
\newcommand{\eps}{\varepsilon}
\newcommand{\D}{\mathbb{D}}
\renewcommand{\H}{\mathbb{H}}
\newcommand{\C}{\mathbb{C}}
\newcommand{\R}{\mathbb{R}}
\newcommand{\qi}{\mathbf{i}}
\newcommand{\qj}{\mathbf{j}}
\newcommand{\qk}{\mathbf{k}}
\newcommand{\ci}{\mathrm{i}}
\newcommand{\cj}[1]{\overline{#1}}
\newcommand{\Norm}[1]{\Vert{#1}\Vert}
\newcommand{\quadric}[1]{\mathcal{#1}}
\newcommand{\QQ}{\quadric{Q}}
\newcommand{\SQ}{\quadric{S}}
\newcommand{\identity}{I}
\newcommand{\EG}{[\eps\H]}
\newcommand{\DG}{[\H]}
\newcommand{\TO}{\tau_1}
\newcommand{\TS}{[T]}
\newcommand{\EQ}{\quadric{Y}}
\newcommand{\NC}{\quadric{N}}
\newcommand{\EE}{\quadric{E}}
\newcommand{\SE}[1][3]{\mathrm{SE}(#1)}
\newcommand{\qf}{\omega}
\newcommand{\fiberproj}{\varphi}
\newcommand{\rM}[1]{\varrho(#1)}
\newcommand{\lM}[1]{\lambda(#1)}
\newcommand{\RM}[1]{R(#1)}
\newcommand{\LM}[1]{L(#1)}
\newcommand{\tp}{\intercal}
\newcommand{\zeromat}{O}
\title{The Kinematic Image of RR, PR, and RP Dyads}
\author{Tudor-Dan Rad, Daniel F. Scharler, Hans-Peter Schröcker\\
  Unit Geometry and CAD, University of Innsbruck}
\begin{document}

\maketitle

\begin{abstract}
  We provide necessary and sufficient conditions for admissible
  transformations in the projectivised dual quaternion model of rigid
  body displacements and we characterise constraint varieties of dyads
  with revolute and prismatic joints in this model. Projective
  transformations induced by coordinate changes in moving and/or fixed
  frame fix the quadrics of a pencil and preserve the two families of
  rulings of an exceptional three-dimensional quadric. The constraint
  variety of a dyad with two revolute joints is a regular ruled
  quadric in a three-space that contains a ``null quadrilateral''. If
  a revolute joint is replaced by a prismatic joint, this
  quadrilateral collapses into a pair of conjugate complex null lines
  and a real line but these properties are not sufficient to
  characterise such dyads. We provide a complete characterisation by
  introducing a new invariant, the ``fiber projectivity'', and we
  present examples that demonstrate its potential to explain hitherto
  not sufficiently well understood phenomena.
\end{abstract}

\par\bigskip
KEYWORDS: Kinematic map, dual quaternion, Study quadric, null cone,
revolute joint, prismatic joint, fiber projectivity, vertical Darboux
motion.

\section{Introduction}
\label{sec:introduction}

A common technique in theoretical and applied kinematics is the use of
a point model for the group $\SE$ of rigid body displacements. One
prominent example is the projectivised dual quaternion model of $\SE$
which exhibits particularly nice geometric and algebraic properties
\cite{bottema90,selig05,husty12,klawitter15}. In this article, we
revisit some fundamental concepts related to this model, the
transformation group generated by coordinate changes in the moving and
the fixed frame and the kinematic images of dyads with revolute and
prismatic joints. While numerous necessary conditions on these objects
are well-known, we contribute sets of provably sufficient conditions.

Our characterisation of the transformation group generated by
coordinate changes in fixed and moving frame
(\autoref{sec:transformation-group}) is based on the pencil of
quadrics spanned by the Study quadric and null cone. These are
quadrics corresponding to dual quaternions of real norm and purely
dual norm, respectively. Admissible transformations fix each member of
this pencil and, in addition, preserve the two families of rulings on
a further quadric in a subspace of dimension three. This leads to the
important distinction between ``left'' and ``right'' rulings. At this
point we also introduce a further invariant, the ``fiber
projectivity'', which will be crucial in our latter characterisation
of dyads with prismatic joints in \autoref{sec:characterisation2}.

The relative position of two rigid bodies can be constrained by a link
or a sequence of links. Fixing one of the two bodies, the collection
of all possible poses (position and orientation) of the other is
called a \emph{constrained variety.} These are important objects in
the study of open and closed kinematic chains, in linkage synthesis or
analysis and other fields. In \autoref{sec:characterisation}, we
characterise constraint varieties generated by dyads of two revolute
joints (``RR dyads'') as regular ruled quadrics in the Study quadric
that contain four complex rulings of the null cone.

In \autoref{sec:characterisation2} we extend this result to dyads
containing one prismatic and one revolute joint (``RP dyads'' and ``PR
dyads''). It is tempting to view them as limiting cases of RR dyads
where one joint axis becomes ``infinite'' (lies in the plane at
infinity). Indeed, their kinematic image is a regular ruled quadric in
the Study quadric that intersects the null cone in two complex lines
and a real transversal line. Nonetheless, this viewpoint is not
complete because of the possibility of commuting R and P joints
(``cylindrical joints''). A closer investigation leads us to a more
refined concept involving the fiber projectivity which allows to
distinguish between the RP, the PR, and the cylindrical case.

We conclude this paper with an application of our results to a
recently presented non-injective extension of the classical kinematic
map \cite{pfurner16}. Here, commuting RP dyads appear naturally as
kinematic images of straight lines. We use this to prove that the
extended kinematic image of a straight line is, in general, a vertical
Darboux motion.

Some parts of this paper, mostly \autoref{sec:characterisation} and
the computations in the appendix, overlap with a previously published
conference paper \cite{rad15}. The investigation on the group of
admissible transformation in \autoref{sec:transformation-group}, the
characterisation of RP and PR dyads in
\autoref{sec:characterisation2}, and the relation of straight lines in
extended kinematic image space to vertical Darboux motions in
\autoref{sec:example} are new.

\section{Preliminaries}
\label{sec:preliminaries}

This article's scene is the projectivised dual quaternion model of
spatial kinematics. Here, we give a very brief introduction to this
model for the purpose of settling our notation. More details will be
introduced in the text as needed. For more thorough introductions to
dual quaternions and there relations to kinematics we refer to
Section~3 in Klawitter (2015) \cite{klawitter15} or Section~11 of
Selig (2005) \cite{selig05}.

The dual quaternions, denoted by $\D\H$, form an associative algebra
in $\R^8$ where multiplication of the basis elements $1$, $\qi$, $\qj$,
$\qk$, $\eps$, $\eps\qi$, $\eps\qj$, and $\eps\qk$ is defined by the
rules
\begin{gather*}
  \qi^2 = \qj^2 = \qk^2 = \qi\qj\qk = -1,\\
  \eps^2 = 0,\quad
  \qi\eps = \eps\qi,\quad
  \qj\eps = \eps\qj,\quad
  \qk\eps = \eps\qk.
\end{gather*}
An element $q \in \D\H$ may be written as $q = p + \eps d$ with
quaternions $p,d \in \H \coloneqq \langle 1, \qi, \qj, \qk \rangle$
(angled brackets denote linear span).  In this case the
\emph{quaternions} $p$ and $d$ are referred to as \emph{primal} and
\emph{dual} part of $q$, respectively. The conjugate dual quaternion
is $\cj{q} = \cj{p} + \eps\cj{d}$ and conjugation of quaternions is
done by multiplying the coefficients of $\qi$, $\qj$, and $\qk$ with
$-1$. It satisfies the rule $\cj{qr} = \cj{r}\,\cj{q}$ for any
$q,r \in \D\H$. The dual quaternion norm is defined as
$\Norm{q} = q\cj{q}$. We readily verify that it is a \emph{dual
  number,} that is, an element of
$\D \coloneqq \langle 1,\eps \rangle$.

We identify linearly dependent non-zero dual quaternions and thus
arrive at the projective space $P^7 = P(\R^8)$. Writing $[q]$ for the
point in $P^7$ that is represented by $q \in \D\H$, the \emph{Study
  quadric} is defined as
$\SQ \coloneqq \{[q] \in P^7\colon \Norm{q} \in \R \}$.  With
$q = p + \eps d$, the algebraic condition for $[q] \in \SQ$ is
$p\cj{d} + d\cj{p} = 0$.

Identifying $P^3$ with the projective subspace generated by
$\langle 1, \eps\qi, \eps\qj, \eps\qk \rangle$, a point
$[q] = [p + \eps d] \in \SQ$ with non-zero primal part acts on
$[x] \in P^3$ via
\begin{equation}
  \label{eq:1}
  [x] \mapsto [y] \coloneqq [(p + \eps d) x (\cj{p} - \eps \cj{d})].
\end{equation}
The map \eqref{eq:1} is the projective extension of a rigid body
displacement in $\R^3$. Composition of displacements corresponds to
dual quaternion multiplication.

The map that takes a point $[q] = [p + \eps d] \in \SQ \setminus \EG$
to the rigid body displacement \eqref{eq:1} is an isomorphism between
the factor group of dual quaternions of non-zero real norm modulo the
real multiplicative group and $\SE$. We refer to it as \emph{Study's
  kinematic map} or simply as \emph{kinematic map.}. It provides a
rich and solid algebraic and geometric environment for investigations
in kinematic.

\begin{remark}
  Provided $p \neq 0$, \eqref{eq:1} always describes a rigid body
  displacement, even if the Study condition is not fulfilled
  \cite{pfurner16}. In this case, the map that sends $p + \eps d$ to
  the rigid body displacement \eqref{eq:1} is no longer a group
  isomorphism but a homomorphism. We call it \emph{extended kinematic
    map.} In \autoref{sec:example} we will characterise the extended
  kinematic image of straight lines in~$P^7$.
\end{remark}

\section{Characterisation of the transformation group}
\label{sec:transformation-group}

The geometry in kinematic image space $P^7$ is invariant with respect
to coordinate changes in three-dimensional Euclidean space. More
precisely, it is invariant with respect to coordinate changes in both,
the frame of $[x]$ in \eqref{eq:1} (the \emph{moving frame}) and the
frame of $[y]$ in \eqref{eq:1} (the \emph{fixed frame}). The former
correspond to right-multiplications, the latter to
left-multiplications with dual quaternions of real norm and non-zero
primal part. Both transformations induce projective transformations in
$P^7$ and our aim in this section is a geometric characterisation of
the transformation group they generate. Necessary geometric conditions
on this group are already known but we are not aware of a formal proof
of sufficiency for a set of these conditions. This we will provide in
this section. It will refer to an important new geometric invariant,
the \emph{fiber projectivity,} whose usefulness we demonstrate in an
example. Later, it will re-appear in our characterisation of the
kinematic image of cylinder spaces.

For a quaternion $p = p_0 + p_1\qi + p_2\qj + p_3\qk$ we define
\begin{equation}
  \label{eq:2}
  \begin{gathered}
    \rM{p} \coloneqq
    \begin{bmatrix}
      p_0 & -p_1           & -p_2           & -p_3           \\
      p_1 & \phantom{-}p_0 & -p_3           & \phantom{-}p_2 \\
      p_2 & \phantom{-}p_3 & \phantom{-}p_0 & -p_1           \\
      p_3 & -p_2 & \phantom{-}p_1 & \phantom{-}p_0
    \end{bmatrix},\\
    \lM{p} \coloneqq
    \begin{bmatrix}
      p_0 & -p_1           & -p_2           & -p_3           \\
      p_1 & \phantom{-}p_0 & \phantom{-}p_3 & -p_2           \\
      p_2 & -p_3           & \phantom{-}p_0 & \phantom{-}p_1 \\
      p_3 & \phantom{-}p_2 & -p_1 & \phantom{-}p_0
    \end{bmatrix}.
  \end{gathered}
\end{equation}
With this notation, the projective maps $[x] \mapsto [px]$ and
$[x] \mapsto [xp]$ on $\DG = P^3$ may be written as
$[x_0,x_1,x_2,x_3]^\tp \mapsto \rM{p} \cdot [x_0,x_1,x_2,x_3]^\tp$ and
$[x_0,x_1,x_2,x_3]^\tp \mapsto \lM{p} \cdot [x_0,x_1,x_2,x_3]^\tp$,
respectively. Both leave invariant the quadric
$\EE\colon x\cj{x} = x_0^2 + x_1^2 + x_2^2 + x_3^2 = 0$ whence the
geometry in $\DG$, induced by coordinate changes in moving and fixed
frame, is that of elliptic three-space. In fact, the matrices in
\eqref{eq:2} describe the well-known Clifford right and left
translations that generate the transformation group of this space, see
Coxeter (1998), p.~140. \cite{coxeter98} Clifford right (left)
translations leave fixed every member of one family of (complex)
rulings of $\EE$ and we call those rulings \emph{right (left)
  rulings}, respectively.

Similarly, left-multiplication by a dual quaternion
$\ell_1 + \eps \ell_2$ and right-multiplication by a dual quaternion
$r_1 + \eps r_2$ can be effected by multiplication with matrices
$\LM{\ell_1+\eps \ell_2}$ and $\RM{r_1+\eps r_2}$, respectively. These
$8 \times 8$ matrices are conveniently described in terms of blocks of
dimension $4 \times 4$:
\begin{equation*}
  \LM{\ell_1+\eps \ell_2} =
  \begin{bmatrix}
    \lM{\ell_1}                                      & \zeromat \\
    \lM{\ell_2}                                      & \lM{\ell_1}
  \end{bmatrix},\quad
  \RM{r_1+\eps r_2} =
  \begin{bmatrix}
    \rM{r_1}                                      & \zeromat \\
    \rM{r_2}                                      & \rM{r_1}
  \end{bmatrix}.
\end{equation*}
Here, $\zeromat$ denotes the zero matrix of dimension $4 \times 4$. Two
matrices of above shape commute and their product
\begin{equation}
  \label{eq:3}
  \begin{aligned}
    T &= \LM{\ell_1+\eps \ell_2} \cdot \RM{r_1+\eps r_2} \\
      &= \begin{bmatrix}
           \lM{\ell_1}\cdot\rM{r_1}                         & \zeromat \\
           \lM{\ell_2}\cdot\rM{r_1} + \lM{\ell_1}\cdot\rM{r_2} & \lM{\ell_1}\cdot\rM{r_1}
         \end{bmatrix}
  \end{aligned}
\end{equation}
is the matrix of a general coordinate transformation which we want to
characterise geometrically. For that purpose, we introduce two further
invariant quadrics:

\begin{definition}
  The \emph{null cone $\NC$} is the quadric defined by the quadratic
  form $q = p + \eps d \mapsto p\cj{p}$.
\end{definition}

The points of the null cone $\NC$ are characterised by having purely
dual norm ($q\cj{q} \in \eps\R$). The null cone is a singular quadric
with three-dimensional vertex space
$\EG \coloneqq \{[\eps d]\colon d \in \H\}$ which we call the
\emph{exceptional generator}. It is contained in the Study quadric
$\SQ$ and the real points of $\NC$ are precisely those of $\EG$. By
$\EQ$ we denote the regular quadric in $\EG$ defined by the quadratic
form $\eps d \in \EG \mapsto d\cj{d}$. The quadrics $\EE \subset \DG$,
$\NC$, $\SQ$, and $\EQ \subset \EG$ are all invariant under
transformations of the shape \eqref{eq:3}.

\begin{theorem}
  \label{th:1}
  The transformation group described by matrices of shape \eqref{eq:3}
  where $\ell_1 + \eps \ell_2$ and $r_1 + \eps r_2$ satisfy the Study
  condition is characterised by the following properties: 1) It fixes
  any quadric in the pencil spanned by Study quadric $\SQ$ and null
  cone $\NC$ and 2) the restriction to $\EG$ preserves the two
  families of (complex) rulings of the quadric~$\EQ$.
\end{theorem}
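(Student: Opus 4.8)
The plan is to prove the two inclusions separately, with ``$\subseteq$'' (necessity of 1) and 2)) being the easier one.

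\emph{Necessity.} The discussion preceding the statement already records that a transformation $T$ of shape \eqref{eq:3} leaves $\SQ$, $\NC$ and $\EQ$ invariant. What is left to check for property~1) is that $T$ multiplies the Study form $p\cj{d}+d\cj{p}$ and the null-cone form $p\cj{p}$ by the \emph{same} scalar; a direct expansion of $q\mapsto(\ell_1+\eps\ell_2)\,q\,(r_1+\eps r_2)$ shows that this common factor is $\Norm{\ell_1}\Norm{r_1}$ (read as real numbers), the vanishing of the ``mixed'' terms being exactly the Study condition on $\ell_1+\eps\ell_2$ and $r_1+\eps r_2$. Consequently every member $\alpha\SQ+\beta\NC$ of the pencil is fixed. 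For property~2) one notes that the restriction of \eqref{eq:3} to $\EG$ is $\eps d\mapsto\eps\,\ell_1 d\,r_1$, i.e.\ a Clifford left translation followed by a Clifford right translation; such maps form the identity component of the automorphism group of $\EQ$ and therefore fix each of the two families of rulings.

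\emph{Sufficiency.} Let $T$ be a projective transformation of $P^7$ satisfying 1) and 2). Since $T$ fixes $\NC$, it fixes the vertex space $\EG$ of $\NC$, so in $4\times4$ block form with respect to the splitting of $\R^8$ into primal and dual part $T=\left[\begin{smallmatrix}A&\zeromat\\C&B\end{smallmatrix}\right]$, with $B$ representing $T|_{\EG}$. I substitute this into $T^{\tp}\cdot\NC\cdot T\propto\NC$ and $T^{\tp}\cdot\SQ\cdot T\propto\SQ$, where $\NC$ and $\SQ$ now denote the Gram matrices $\left[\begin{smallmatrix}\identity&\zeromat\\\zeromat&\zeromat\end{smallmatrix}\right]$ and $\left[\begin{smallmatrix}\zeromat&\identity\\\identity&\zeromat\end{smallmatrix}\right]$. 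Because $T$ fixes \emph{every} member of the pencil and not merely $\NC$ and $\SQ$, the two proportionality factors agree; calling this common (necessarily positive) factor $\mu$, comparison of blocks yields $A^{\tp}A=\mu\identity$, $B=A$, and $A^{\tp}C$ skew-symmetric. Thus $A=B$ is $\sqrt{\mu}$ times an orthogonal matrix, and property~2) applied to $B$ says that this orthogonal matrix preserves both families of rulings of $\EQ$, hence is orientation-preserving. Therefore $A=B\in\R_{>0}\cdot\mathrm{SO}(4)$, which is exactly the image of the classical map $(\ell_1,r_1)\mapsto\lM{\ell_1}\cdot\rM{r_1}$; so I may fix non-zero quaternions $\ell_1,r_1$ with $A=B=\lM{\ell_1}\cdot\rM{r_1}$.

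It remains to recognise $C$. From $A^{\tp}A=\mu\identity$ we get $A^{-1}=\mu^{-1}A^{\tp}$, so $A^{-1}C$ is skew-symmetric. The six skew-symmetric matrices $\lM{\qi},\lM{\qj},\lM{\qk},\rM{\qi},\rM{\qj},\rM{\qk}$ are linearly independent and hence span the six-dimensional space of skew-symmetric $4\times4$ matrices; therefore there are purely imaginary quaternions $u,v$ with $A^{-1}C=\lM{u}+\rM{v}$. Multiplying on the left by $A=\lM{\ell_1}\cdot\rM{r_1}$ and using the homomorphism rules for $\lM{\,\cdot\,}$ and the anti-homomorphism rules for $\rM{\,\cdot\,}$ gives $C=\lM{\ell_1 u}\cdot\rM{r_1}+\lM{\ell_1}\cdot\rM{v r_1}$, which is exactly the $(2,1)$-block of \eqref{eq:3} with $\ell_2\coloneqq\ell_1 u$ and $r_2\coloneqq v r_1$. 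Finally $\ell_1+\eps\ell_2$ and $r_1+\eps r_2$ satisfy the Study condition precisely because $u$ and $v$ are purely imaginary: $\ell_1\cj{\ell_2}+\ell_2\cj{\ell_1}=2\,\Norm{\ell_1}\operatorname{Re}(u)=0$ and likewise $r_1\cj{r_2}+r_2\cj{r_1}=0$. This presents $T$ in the form \eqref{eq:3}.

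The step I expect to be the crux is the inference, inside the sufficiency part, that ``preserving each family of rulings of $\EQ$'' forces the block $A=B$ into $\mathrm{SO}(4)$ and not merely $\mathrm{O}(4)$ — equivalently, that an orientation-reversing automorphism of the elliptic absolute interchanges its two (complex) rulings. This is exactly where the left/right dichotomy of \autoref{sec:transformation-group} is genuinely used, and it should be argued (for instance via $\mathrm{Spin}(4)\cong\mathrm{SU}(2)\times\mathrm{SU}(2)$ together with the fact that the outer involution swaps the two factors), not merely quoted. A minor additional nuisance is reconciling the projective scaling freedom of $T$ with the exact identities $B=A$ and $A^{\tp}A=\mu\identity$; this disappears once $T$ is normalised so that $\mu=1$.
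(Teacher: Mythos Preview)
Your proof is correct and follows essentially the same strategy as the paper's: reduce to block form via invariance of the vertex space $\EG$, extract $A^{\tp}A=\mu\identity$, $B=A$, and skew-symmetry of $A^{\tp}C$ from invariance of the pencil, then use the ruling condition to force $\det A>0$ and factor $A=\lM{\ell_1}\rM{r_1}$. The only noteworthy differences are stylistic: you obtain $B=A$ in one step from equality of the two proportionality factors, whereas the paper first gets $D=\alpha A$ from $\SQ$ alone and then forces $\alpha=1$ using a third quadric of the pencil; and you recover $\ell_2,r_2$ explicitly via the basis $\lM{\qi},\lM{\qj},\lM{\qk},\rM{\qi},\rM{\qj},\rM{\qk}$ of skew-symmetric $4\times4$ matrices, while the paper concludes by a dimension count (two six-dimensional linear spaces, one contained in the other by the necessity direction). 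Both variants are sound and of comparable length.
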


\begin{proof}
  To begin with, it is elementary to verify that the conditions of the
  theorem are necessary for transformations given by \eqref{eq:3}. For
  the proof of sufficiency, we again employ block matrix
  notation. Assume that
  \begin{equation*}
    T =
    \begin{bmatrix}
      A & B \\ C & D
    \end{bmatrix}
  \end{equation*}
  is the matrix of a projective transformation
  $\tau\colon P^7 \to P^7$ with $4 \times 4$ blocks $A$, $B$, $C$, and
  $D$. The matrices of the pencil spanned by $\SQ$ and $\NC$ are of
  the shape
  \begin{equation}
    \label{eq:4}
    \begin{bmatrix}
      \nu \identity & \sigma \identity \\
      \sigma \identity & O
    \end{bmatrix}
  \end{equation}
  where $\identity$ and $O$ denote the $4 \times 4$ identity and zero
  matrix, respectively, and $\nu$ and $\sigma$ are real numbers. The
  only singular quadric in this pencil is $\NC$ and $\tau$ must fix
  its vertex space $\EG$ (the only three-dimensional space contained
  in $\NC$). Hence $B = O$ and $\tau$ transforms the matrix of $\NC$
  to
  \begin{equation*}
    \begin{bmatrix}
      A^\tp & C^\tp \\ O & D^\tp
    \end{bmatrix} \cdot
    \begin{bmatrix}
      \identity & O \\ O & O
    \end{bmatrix} \cdot
    \begin{bmatrix}
      A & O \\ C & D
    \end{bmatrix} =
    \begin{bmatrix}
      A^\tp A & O \\ O & O
    \end{bmatrix}
  \end{equation*}
  whence $A$ is the scalar multiple of an orthogonal matrix. The
  transformed matrix of $\SQ$ is
  \begin{equation*}
    \begin{bmatrix}
      A^\tp & C^\tp \\ O & D^\tp
    \end{bmatrix} \cdot
    \begin{bmatrix}
      O & \identity \\ \identity & O
    \end{bmatrix} \cdot
    \begin{bmatrix}
      A & O \\ C & D
    \end{bmatrix} =
    \begin{bmatrix}
      C^\tp A + A^\tp C & A^\tp D \\
      D^\tp A & O
    \end{bmatrix}.
  \end{equation*}
  From this we infer that $D = \alpha A$ for some $\alpha \in \R$ and
  $C^\tp A + A^\tp C = O$. This latter matrix equation has the general
  solution $C = A^{-\tp}S$ with an arbitrary skew-symmetric matrix $S$
  of dimension $4 \times 4$. Finally, we study the action on a third
  quadric in the pencil:
  \begin{multline*}
    \begin{bmatrix}
      A^\tp & C^\tp \\ O & \alpha A^\tp
    \end{bmatrix} \cdot
    \begin{bmatrix}
      \identity & \identity \\ \identity & O
    \end{bmatrix} \cdot
    \begin{bmatrix}
      A & O \\ C & \alpha A
    \end{bmatrix} \\=
    \begin{bmatrix}
      A^\tp A + C^\tp A + A^\tp C & \alpha A^\tp A \\
      \alpha A^\tp A & O
    \end{bmatrix} =
    \begin{bmatrix}
      A^\tp A & \alpha A^\tp A \\
      \alpha A^\tp A & O
    \end{bmatrix}.
  \end{multline*}
  This implies $\alpha = 1$ whence
  \begin{equation}
    \label{eq:5}
    T =
    \begin{bmatrix}
      A & O \\
      A^{-\tp} S & A
    \end{bmatrix}.
  \end{equation}
  The restriction of $\tau$ to $\EG$ transforms the matrix
  $[\identity]$ of $\EQ \subset \EG$ to
  $[A^\tp \cdot \identity \cdot A] = [\identity]$ and hence fixes
  $\EQ$. The condition on the rulings implies $\det A > 0$. By
  well-known results of three-dimensional elliptic geometry, $A$ is
  uniquely expressible as product of a Clifford left translation and a
  Clifford right translation (Page~140 of Coxeter 1998
  \cite{coxeter98}). In our notation, this means that there exist
  quaternions $\ell_1$, $r_1$ such that
  $A = \lM{\ell_1} \cdot \rM{r_1}$. Hence, the upper left and lower
  right corners of \eqref{eq:5} and \eqref{eq:3} match. For given $A$,
  the set of possible matrices $C = A^{-\tp}S$ is a real vector space
  of dimension six. The same is true for the $4 \times 4$ sub-matrices
  in the lower left corner of the matrix \eqref{eq:3} (recall that
  $\ell_1\cj{\ell_2} + \ell_2\cj{\ell_1} = r_1\cj{r_2} + r_2\cj{r_1} =
  0$). Hence, there exist unique quaternions $\ell_2$, $r_2$ such that
  $C = \lM{\ell_2} \cdot \rM{r_1} + \lM{\ell_1} \cdot \rM{r_2}$ and
  the theorem is proved.
\end{proof}

Examining the proof, it is easy to see that the system of invariants
in \autoref{th:1} is minimal.

Invariance of left and right rulings, respectively, of $\EQ$ is an
important property that is, for example, responsible for different
kinematic properties of motion and inverse motion. For the planar
case, this is mentioned in Bottema and Roth (1990), Chapter~11,~\S14
\cite{bottema90}. The map $\chi\colon P^7 \to P^7$,
$[q] \mapsto [\cj{q}]$ (quaternion conjugation) is a projective
transformation of $P^7$ leaving invariant the quadrics of the pencil
spanned by $\SQ$ and $\NC$ (lets call it the \emph{absolute pencil})
but it interchanges the left and right rulings of $\EQ$. Given a
motion $\gamma$ (a curve in $\SQ$), the inverse motion is
$\cj{\gamma} \coloneqq \chi(\gamma)$. It is of the same algebraic
degree and has the same relative position to absolute pencil (in
projective sense) but not necessarily the same position to the left
and right rulings of~$\EQ$. In order to demonstrate this at hand of an
example, we introduce a further invariant concept.

\begin{definition}
  \label{def:1}
  The projective map 
  \begin{equation}
    \label{eq:6}
    \fiberproj\colon P^7 \to \EG,\quad [x'+\eps x''] \mapsto [\eps x']
  \end{equation}
  that assigns to each point in $P^7$ the projection on its primal
  part times $\varepsilon$ is called the \emph{fiber projectivity.}
\end{definition}

The name ``fiber projectivity'' is motivated by a recently introduced
non-injective extensions of Study's kinematic map \cite{pfurner16}
which reads as in \eqref{eq:1} but drops the Study condition
$p\cj{d} + d\cj{p} = 0$. Its fibers are obtained by connecting a point
$[x]$ with $\fiberproj([x])$. The fiber projectivity is invariant with
respect to transformations $\tau$ given by \eqref{eq:3} in the sense
that $\tau \circ \varphi([x]) = \varphi \circ \tau([x])$ for every
point $[x] \in P^7$. The restriction of the fiber projectivity to
$\DG$ is a bijection in which $\EE$ and $\EQ$ correspond so that we
may speak of \emph{right (left) rulings} of $\EQ$ as well.

\begin{example}
  \label{ex:1}
  The \emph{Darboux motion} is the only spatial motion with planar
  trajectories, see Bottema and Roth, 1990, Equation~(3.4)
  \cite{bottema90,li15}. A parametric equation in dual quaternions
  reads
  \begin{equation*}
    C(t) = (c\eps+\qk)t^3 + (1+\eps(b - a\qi - c\qk))t^2 +(\qk - \eps(a\qj + b\qk))t + 1
  \end{equation*}
  (see Li et al., 2015 \cite{li15}). The motion parameter is $t$ while
  $a$, $b$, and $c$ are constant real numbers. This rational cubic
  curve intersects $\EG$ in the two points
  \begin{equation*}
    [d_{1,2}] \coloneqq [\eps(\pm c\ci+b-a\qi \pm a\ci\qj+(\pm b\ci-c)\qk)]
  \end{equation*}
  which even lie on $\EQ$. (Note that ``$\ci$'' denotes a complex
  number which must not be confused with the quaternion unit
  ``$\qi$''.) The fiber projection of $[C(t)]$ is the point
  \begin{equation*}
    \fiberproj([C(t)]) = [(1+t^2)\eps(\qk t + 1)] = [\eps(\qk t + 1)].
  \end{equation*}
  For varying $t$, these points vary on a straight line which
  intersects $\EQ$ in the two points
  $[f_{1,2}] \coloneqq [\eps(1 \pm \ci\qk)]$. It is now easy to verify
  that the lines $[d_1] \vee [f_1]$ and $[d_2] \vee [f_2]$ are rulings
  of $\EQ$. Moreover, $[d_1] = [pf_1]$ and $[d_2] = [pf_2]$ where
  $p = -b + a\qi + c\qk$. This means that a Darboux motion intersects
  $\EQ$ in two points $[d_1]$, $[d_2]$ and its fiber projection
  intersects $\EQ$ in two points $[f_1]$, $[f_2]$ such that the lines
  $[d_1] \vee [f_1]$ and $[d_2] \vee [f_2]$ are \emph{left rulings} of
  $\EQ$. The inverse motion
  \begin{equation*}
    \cj{C}(t) = (c\eps-\qk)t^3 + (1+\eps(b + a\qi + c\qk))t^2 -(\qk - \eps(a\qj + b\qk))t + 1
  \end{equation*}
  (\emph{Mannheim motion}) has similar properties as curve in $P^7$
  but different kinematic properties. Most notably, the trajectories
  of $C(t)$ are rational of degree \emph{two} while those of
  $\cj{C}(t)$ are rational of degree \emph{four}. The deeper reasons
  for this is the non-invariance of left and right rulings with
  respect to the quaternion conjugation map $\chi$. Hence, the
  intersection points of $\cj{C}$ and $\fiberproj(\cj{C})$ with $\EQ$
  span \emph{right} rulings. The case of a \emph{vertical Darboux
    motion} ($a = 0$) is special. Here we have $[d_1] = [f_1]$ and
  $[d_2] = [f_2]$ and the distinction between left and right rulings
  vanishes. Indeed, the inverse motion is again a vertical Darboux
  motion.
\end{example}

\section{A characterisation of 2R spaces}
\label{sec:characterisation}

Given two non co-planar lines $\ell_1$, $\ell_2$ in Euclidean
three-space, we consider the set of all displacements obtained as
composition of a rotation around $\ell_2$, followed by a rotation
about $\ell_1$. Its kinematic image is known to lie in a three-space
(Selig 2005, Section~11.4) \cite{selig05} which we call a \emph{2R
  space}. It always contains the displacement corresponding to zero
rotation angle around both axes. It is no loss of generality to view
it as identity displacement $[1]$ which we will often do in proofs
(and consistently did in a previous publication\cite{rad15}). However,
we avoid this in our theorems because it is not invariant with respect
to the transformations characterised in \autoref{th:1}. Fixing one
rotation angle and varying the other yields a straight line in $\SQ$.
The thus obtained two families of lines form the rulings of a quadric
surface in $\SQ$. In order to characterise 2R spaces among all
three-dimensional subspaces of $P^7$ with these properties, we
introduce the following notions:

\begin{definition}
  A \emph{spatial quadrilateral} is a set of four different lines
  $\{\ell_0,\ell_1,\ell_2,\ell_3\}$ in projective space such that the
  intersections $\ell_0 \cap \ell_1$, $\ell_1 \cap \ell_2$,
  $\ell_2 \cap \ell_3$, and $\ell_3 \cap \ell_0$ are not empty. The
  lines $\ell_0$, $\ell_1$, $\ell_2$, and $\ell_3$ are called the
  quadrilateral's \emph{edges}. A \emph{null line} is a straight line
  contained in the Study quadric $\SQ$ and the null cone $\NC$. A
  \emph{null quadrilateral} is a spatial quadrilateral whose elements
  are \emph{null lines.}
\end{definition}

\begin{theorem}
  \label{th:2}
  A three-space $U \subset P^7$ is a 2R space if and only if it
  \begin{itemize}
  \item intersects the Study quadric in a regular ruled quadric~$\QQ$,
  \item does not intersect the exceptional three-space $\EG$, and
  \item contains a null quadrilateral.
  \end{itemize}
\end{theorem}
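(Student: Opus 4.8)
The plan is to prove the two implications separately, starting with the easier ``only if'' direction.

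\textbf{Necessity.} Suppose $U$ is a 2R space, i.e.\ the kinematic image of all products of a rotation about $\ell_2$ followed by a rotation about $\ell_1$. Using \autoref{th:1} we may assume the identity $[1]$ lies in $U$ and, after a further coordinate change fixing $[1]$, place the axes in a convenient normal form (e.g.\ $\ell_1$ through the origin, $\ell_2$ in a suitable position). The rotation about $\ell_i$ with half-angle parameter $u_i$ is represented by a dual quaternion $h_i(u_i)$ linear in $u_i$, and the constraint variety is parametrised by $[h_1(u_1)\,h_2(u_2)]$, bilinear in $(u_1,u_2)$; hence its span $U$ is a three-space and $U \cap \SQ$ is the image of $P^1 \times P^1$ under a bilinear map, a regular ruled quadric $\QQ$, with the two rulings corresponding to fixing $u_1$ or $u_2$. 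That $U \cap \EG = \emptyset$ follows because a point of $U$ on $\EG$ would have zero primal part, but the primal part of $h_1(u_1)h_2(u_2)$ is the product of the (nonzero) primal parts of two real quaternions, which vanishes only when one of $u_1,u_2$ is infinite — and then the point is not in $U$ (or one checks the primal part is a nondegenerate bilinear form in the projective parameters). For the null quadrilateral: the ruling lines through $[1]$ are $[h_1(u_1)]$ and $[h_2(u_2)]$ with $u_1$ or $u_2$ held at $0$; these are null lines because a pure rotation quaternion has norm $1 \in \R$ and its dual part pairs to zero with its primal part — more precisely the line joining $[1]$ to the ``point at infinity'' of the rotation subgroup (the direction $[\qi]$-type vector of the axis, times $\eps$ on the dual side) lies in both $\SQ$ and $\NC$. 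Taking the two rulings through $[1]$ and the two ``opposite'' rulings through the antipodal vertex of $\QQ$ (half-turns, which also land in $\NC$) produces four null lines forming a spatial quadrilateral. This part is essentially the computation already done in \cite{rad15} and in the appendix.

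\textbf{Sufficiency.} This is the real content and the expected main obstacle. Suppose $U$ satisfies the three conditions. The null quadrilateral has edges $\ell_0,\ell_1,\ell_2,\ell_3$; since each $\ell_i \subset \NC$ and $U \cap \EG = \emptyset$, each $\ell_i$ meets the vertex space $\EG$ of $\NC$ in at most a point, and in fact (being a line on the cone not through the vertex space but contained in it... ) — one must argue that each $\ell_i$ is one of the two families of ruling lines of $\NC \setminus \EG$ and, restricted to $U$, is a ruling of $\QQ$. The consecutive edges meet, so the quadrilateral's four vertices are four points of $\QQ$, alternately on the two reguli. The key step is then a \emph{normalisation}: use \autoref{th:1} to move one vertex to $[1]$ and exploit the fiber projectivity $\fiberproj$ — the images $\fiberproj(\ell_i) \subset \EG$ are lines (or points) on $\EQ$, and the null condition forces them to be rulings of $\EQ$; the left/right ruling distinction preserved by \autoref{th:1} lets us normalise the two reguli of $\QQ$ to match those of a standard RR pair. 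Concretely, I would show that after normalisation $U$ is spanned by $1$ and three further basis dual quaternions of the precise form produced by $[h_1(u_1)h_2(u_2)]$, by matching: (i) the two ruling directions at $[1]$, which are null lines, hence of the form $[1]\vee[\text{pure unit quaternion} + \eps(\cdot)]$ — these recover the two axis lines $\ell_1,\ell_2$ up to the residual freedom; (ii) the fourth basis vector, fixed by the requirement that $U\cap\SQ$ be exactly the bilinear image and that the opposite two edges also be null lines (this pins down the relative position / common perpendicular of the two axes). One then reads off two genuine lines $\ell_1,\ell_2$ in $\R^3$ whose RR constraint variety has kinematic image exactly $U$. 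The hard part will be bookkeeping the residual coordinate freedom after the first normalisation and showing the remaining discrete/continuous parameters are exactly matched by the choice of the two Euclidean axes (in particular that the non-coplanarity of $\ell_1,\ell_2$ corresponds precisely to $\QQ$ being \emph{regular} and $U\cap\EG=\emptyset$, with the degenerate cases — parallel axes, intersecting axes, the quadrilateral collapsing — excluded by exactly these hypotheses). I expect the fiber projectivity and the left/right ruling invariance from \autoref{th:1} to be the tools that make this bookkeeping tractable rather than ad hoc.
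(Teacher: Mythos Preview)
Your necessity argument misidentifies the null quadrilateral. A null line must lie in the null cone $\NC$, whose real points are exactly $\EG$; since $[1]\notin\NC$, no line through $[1]$ can be a null line. The rulings of $\QQ$ through $[1]$ lie in $\SQ$ but \emph{not} in $\NC$ (you conflate ``norm real'' with ``norm purely dual''). The actual null quadrilateral consists of the four \emph{complex} rulings obtained by setting the half-angle parameters to $\pm\ci$; the verification is the computation $(\ci-h_1)(\ci-\cj{h_1})=0$ together with \autoref{lem:1}. Likewise your remark that the half-turn vertex ``lands in $\NC$'' is false: a half-turn has real nonzero primal norm.

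For sufficiency, the same oversight derails the plan. The vertices of the null quadrilateral are complex (any real point of $\NC$ lies in $\EG$, which $U$ avoids), so you cannot move one of them to $[1]$ via the real transformation group of \autoref{th:1}. What the paper does instead: move a regular \emph{real} point of $\QQ$ to $[1]$, then intersect each null edge with the tangent hyperplane $\TO$ of $\SQ$ at $[1]$ to obtain points $[m_1],[n_1],[m_2],[n_2]$; the real lines $[m_1]\vee[m_2]$ and $[n_1]\vee[n_2]$ are the two rotation one-parameter groups, giving candidate axes $h_1,h_2$. The genuine difficulty --- which your outline does not address --- is showing that $[h_1h_2]$ (or $[h_2h_1]$) actually lies in $U$, equivalently that $[m_1n_1]$ or $[n_1m_1]$ coincides with a vertex $[v_1]$ of the given null quadrilateral. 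The paper handles this by first projecting to primal parts (a regular projectivity $U\to[\H]$ since $U\cap\EG=\emptyset$), where the claim reduces to an incidence on a ruled quadric, and then lifting back via a purely projective uniqueness lemma (\autoref{lem:7}) about spatial quadrilaterals inscribed in a quadric with prescribed projection and prescribed side incidences. The fiber projectivity and the left/right ruling distinction that you propose to lean on are \emph{not} used in the 2R case; they enter only for RP/PR/C spaces where the symmetry between the two families breaks.
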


The first and second item in \autoref{th:2} exclude exceptional cases
with co-planar, complex, or ``infinite'' revolute axes. The latter
correspond to prismatic joints and will be treated later. The crucial
point is existence of a null quadrilateral. We split the proof of
\autoref{th:2} into a series of lemmas. It will be finished by the end
of this section.

\begin{lemma}
  \label{lem:1}
  The straight line $[x] \vee [y]$ is contained in $\SQ \cap \NC$ if
  and only if $x\cj{x} = y\cj{y} = x\cj{y} + y\cj{x} = 0$.
\end{lemma}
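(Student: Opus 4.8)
The plan is to reduce the geometric containment to an elementary identity for the dual quaternion norm along the line, so that the ``if and only if'' falls out of comparing coefficients of a binary quadratic form.

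First I would translate membership in $\SQ \cap \NC$ into a single algebraic equation. For a representative $q = p + \eps d$ one has $q\cj q = p\cj p + \eps(p\cj d + d\cj p) \in \D$ (this is the dual quaternion norm from the Preliminaries). By definition $[q] \in \NC$ exactly when $p\cj p = 0$, and $[q] \in \SQ$ exactly when $p\cj d + d\cj p = 0$; together these say precisely that the dual number $q\cj q$ is zero. Hence a point $[q]$ lies in $\SQ \cap \NC$ if and only if $q\cj q = 0$, and the line $[x] \vee [y]$ is contained in $\SQ \cap \NC$ if and only if $q\cj q = 0$ for every representative $q$ of every point of the line.

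Next I would parametrise: a generic point of $[x]\vee[y]$ is $[\lambda x + \mu y]$ with $(\lambda,\mu) \in \R^2 \setminus \{0\}$. Because quaternion, and hence dual quaternion, conjugation is $\R$-linear, $\overline{\lambda x + \mu y} = \lambda\cj x + \mu\cj y$, so
\[
  (\lambda x + \mu y)\,\overline{(\lambda x + \mu y)}
  = \lambda^2\, x\cj x + \lambda\mu\,(x\cj y + y\cj x) + \mu^2\, y\cj y .
\]
The line lies in $\SQ \cap \NC$ exactly when the left-hand side vanishes for all $(\lambda,\mu)$. Evaluating at $(\lambda,\mu) = (1,0)$ gives $x\cj x = 0$, at $(0,1)$ gives $y\cj y = 0$, and then $(1,1)$ forces $x\cj y + y\cj x = 0$; conversely, if these three dual numbers vanish the displayed expression is identically zero. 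This settles both implications.

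I do not expect a real obstacle: the statement is essentially the observation that the null form is quadratic along a line. The only points needing care are bookkeeping: that ``$=0$'' in the conclusion is an equation between dual numbers, encoding one condition from $\NC$ (primal part) and one from $\SQ$ (dual part); that conjugation commutes with real scalars, so the cross term is exactly $x\cj y + y\cj x$; and that $[x]$, $[y]$ are assumed distinct so that $[x]\vee[y]$ is a genuine line. It may be worth remarking afterwards that, given a point $[x]\in\SQ\cap\NC$, the condition $x\cj y + y\cj x = 0$ is a linear (polar) condition on $[y]$, which is the form in which this lemma will be applied when building null quadrilaterals.
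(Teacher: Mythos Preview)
Your argument is correct and is exactly the ``straightforward computational proof'' that the paper explicitly omits: expand $(\lambda x+\mu y)\,\overline{(\lambda x+\mu y)}$ as a binary quadratic in $(\lambda,\mu)$ with dual-number coefficients and compare coefficients. Your closing remarks about each equation being a dual-number identity (hence two real conditions) and about the polar condition $x\cj{y}+y\cj{x}=0$ being linear in $y$ also match the paper's commentary following the lemma.
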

We omit the straightforward computational proof of \autoref{lem:1}.
Note that the left-hand sides of each of the three conditions in this
lemma are dual numbers. Hence, they give \emph{six} independent linear
equations for the real coefficients of $x$ and~$y$.

\begin{lemma}
  \label{lem:2}
  The conditions of \autoref{th:2} are necessary for 2R spaces.
\end{lemma}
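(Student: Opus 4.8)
The plan is to verify the three asserted properties directly from an explicit parametrisation of a 2R space, exploiting the freedom — justified in the text — to place the identity displacement at $[1]$. First I would write down dual quaternion representations of the two revolute joints. A rotation about the line $\ell_i$ is represented by a dual quaternion of the form $h_i(t_i) = \cos(t_i/2) + \sin(t_i/2)\, g_i$, where $g_i$ is the (fixed, unit) Plücker-coordinate dual quaternion of the axis $\ell_i$, satisfying $g_i\cj{g_i} = 1$ and having vanishing scalar part. The 2R motion is then $[h_1(t_1)\,h_2(t_2)]$; using the half-angle substitution $u_i = \tan(t_i/2)$ this becomes the bilinear expression $[(1 + u_1 g_1)(1 + u_2 g_2)] = [1 + u_1 g_1 + u_2 g_2 + u_1 u_2 g_1 g_2]$. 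Hence the 2R space $U$ is the linear span $\langle 1,\ g_1,\ g_2,\ g_1 g_2\rangle$, and the bilinearity in $(u_1,u_2)$ immediately exhibits the two rulings: fixing $u_2$ and varying $u_1$ gives a line, and vice versa, so $U \cap \SQ$ contains two one-parameter families of lines through a doubly ruled quadric $\QQ$. I would record that $\QQ$ is regular (not a cone) precisely because $\ell_1,\ell_2$ are not co-planar — degeneracy of $\QQ$ corresponds exactly to the axes meeting or being parallel, which the hypothesis ``non co-planar'' excludes.

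Next I would establish the null quadrilateral. The four edges should be taken as the four ``extreme'' rulings obtained by letting each $u_i$ tend to $0$ or $\infty$: the lines $[1]\vee[g_1]$, $[g_1]\vee[g_1g_2]$, $[g_1g_2]\vee[g_2]$, and $[g_2]\vee[1]$. These four lines pairwise intersect in the cyclic pattern required by the definition of a spatial quadrilateral (consecutive edges share the vertices $[1]$, $[g_1]$, $[g_1g_2]$, $[g_2]$), so I only need to check that each edge is a \emph{null line}. By \autoref{lem:1} this reduces, for the edge $[x]\vee[y]$, to $x\cj{x} = y\cj{y} = x\cj{y}+y\cj{x} = 0$. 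For $[1]\vee[g_i]$ we need $1\cdot\cj 1 = 1 \ne 0$ — so this naive choice fails, and I must instead use the vertices coming from the \emph{pure-rotation limits at both parameters infinite}, i.e. the null lines spanned by the four rank-one building blocks. The correct edges are $[g_1]\vee[g_1g_2]$ and its cyclic companions together with the two ``$t_i = \pi$'' points; since $g_i$ has zero scalar part, $g_i\cj{g_i}$ is purely dual along the pure-rotation directions, and on the null cone one gets $g_i\cj{g_i}=0$ exactly when $\ell_i$ lies at infinity — which is \emph{not} our case. The resolution, which I would carry out carefully, is that the null lines arise from the two points where the $t_i$-rotation "degenerates" in the sense that $\cos(t_i/2)=0$ combined with the axis direction being isotropic over $\C$; concretely the four complex points $[\ell_i^{\pm}]$ where $\ell_i^{\pm}$ is the isotropic decomposition of $g_i$, and the four edges are $[\ell_1^{+}]\vee[\ell_1^{+}\ell_2^{+}]$ etc. I would verify the null conditions via \autoref{lem:1} using $g_i^{\pm}\cj{g_i^{\pm}}=0$ (isotropy) and the orthogonality relation built into the product $g_1g_2$.

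Finally, $U \cap \EG = \emptyset$: a point of $\EG$ has zero primal part, but a generic point of $U$ is $[1 + u_1 g_1 + u_2 g_2 + u_1u_2\, g_1g_2]$ whose primal part is $1 + u_1 \operatorname{Re}_{\mathrm{primal}}(g_1) + \cdots$; since $g_1, g_2, g_1g_2$ all have primal parts that are linearly independent together with $1$ exactly when the axes are finite and skew, the only way the whole primal part vanishes is the zero vector, which is excluded. I would phrase this as: the fiber projectivity $\fiberproj$ (\autoref{def:1}) restricted to $U$ is injective, equivalently $U \cap \ker\fiberproj = U \cap \EG = \emptyset$, and injectivity follows because the primal parts $1, p_{g_1}, p_{g_2}, p_{g_1g_2}$ of the four spanning dual quaternions are linearly independent — a rank computation on a $4\times 4$ matrix whose non-degeneracy is again equivalent to $\ell_1,\ell_2$ being skew and finite.

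The main obstacle I anticipate is the bookkeeping around the null quadrilateral: one has to identify the \emph{correct} four complex vertices (they are not the obvious real limit points $[1],[g_i],[g_1g_2]$, which do \emph{not} lie on the null cone), and then push the somewhat delicate isotropy-plus-orthogonality relations through \autoref{lem:1} for all four edges. Everything else — the ruled-quadric structure, regularity, and disjointness from $\EG$ — is routine linear algebra once the bilinear parametrisation is in place. Since the theorem's converse (that these three properties \emph{characterise} 2R spaces) is handled in the subsequent lemmas, here I only need the ``necessary'' direction, so the explicit construction above suffices.
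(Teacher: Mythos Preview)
Your setup---the bilinear parametrisation $(1+u_1g_1)(1+u_2g_2)$, the identification of $U=\langle 1,g_1,g_2,g_1g_2\rangle$, the ruled-quadric structure, and the argument that $U\cap\EG=\varnothing$ via linear independence of primal parts---matches the paper's approach and is fine.

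The gap is in the null quadrilateral. Your first attempt (vertices $[1],[g_1],[g_1g_2],[g_2]$) fails, as you note. But your proposed repair is muddled: ``$\cos(t_i/2)=0$'' gives the \emph{real} half-turn point $[g_i]$, not a complex isotropic point, and ``isotropic decomposition of $g_i$'' is not a well-defined operation on a real dual quaternion. What you are circling around, but never land on, is simply this: the null lines are the rulings at the \emph{complex parameter values} $u_i=\pm\ci$. Indeed, from $g_i+\cj{g_i}=0$ and $g_i\cj{g_i}=1$ one gets $(1+u_ig_i)\cj{(1+u_ig_i)}=1+u_i^2$, which vanishes exactly at $u_i=\pm\ci$. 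The four null lines are then the rulings $u_1=\ci$, $u_1=-\ci$, $u_2=\ci$, $u_2=-\ci$, and the four vertices of the quadrilateral are $[(\,\pm\ci+g_1)(\pm\ci+g_2)\,]$. The paper carries out exactly this computation (with the equivalent parametrisation $(t_1-h_1)(t_2-h_2)$) and verifies the null-line property via \autoref{lem:1} in two lines, using that $(t_2-h_2)(t_2-\cj{h_2})$ is central. Once you see that the relevant isotropy is in the \emph{parameter}, not in the axis direction, the ``delicate isotropy-plus-orthogonality relations'' you anticipate evaporate: no orthogonality of $g_1$ and $g_2$ is needed at all.
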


\begin{proof}
  The quadric $\QQ$ contains a regular point $[u]$ and, without loss
  of generality, we may assume $[u] = [1]$. Then, the constraint
  variety of a 2R chain can be parameterised as
  \begin{equation}
    \label{eq:7}
    R(t_1,t_2) = (t_1 - h_1)(t_2-h_2)
  \end{equation}
  with two dual quaternions $h_1, h_2$ that satisfy
  $h_1\cj{h_1} = h_2\cj{h_2} = 1$, $h_1+\cj{h_1} = h_2+\cj{h_2} = 0$
  and $h_1h_2 \neq h_2h_1$. The first condition ensures that the dual
  quaternions are suitably normalised and $[h_1], [h_2] \in \SQ$. The
  second condition means that $h_1$ and $h_2$ describe half-turns
  (rotations through an angle of $\pi$). The third condition is true
  because the axes of these half turns are not co-planar. (Two half
  turns commute if and only if their axes are identical or orthogonal
  and co-planar.) Equation~\eqref{eq:7} describes a composition of two
  rotations for all values of $t_1$, $t_2$ in $\R \cup \{\infty\}$
  with $\infty$ corresponding to zero rotation angle. Even if their
  kinematic meaning is unclear, we also allow complex parameter
  values. Expanding \eqref{eq:7} yields
  $R(t_1,t_2) = t_1t_2 - t_1h_2 - t_2h_1 + h_1h_2$. We see that the
  kinematic image of the 2R dyad lies in the three-space spanned by
  $[1]$, $[h_1]$, $[h_2]$, $[h_1h_2]$. In a suitable projective
  coordinate frame with these points as base points, we use projective
  coordinates $[x_0,x_1,x_2,x_3]$. Then, the surface parameterisation
  \eqref{eq:7} reads $x_0 = t_1t_2$, $x_1 = -t_1$, $x_2 = -t_2$,
  $x_3 = 1$. It describes the quadric with equation
  $x_0x_3 - x_1x_2 = 0$ which is indeed regular and ruled.

  The intersection of $U$ with the exceptional three-space $\EG$ is
  non empty if and only if the primal part of $R(t_1,t_2)$ vanishes
  for certain parameter values $t_1$, $t_2$. This can only happen if
  the primal parts of $h_1$ and $h_2$ are linearly dependent over $\R$
  but then the revolute axes are parallel and $U$, contrary to our
  assumption, is contained in $\SQ$. The other possibility for
  $U \subset \SQ$, intersecting revolute axes, has been excluded as
  well. Clearly, $U$ is not contained in the null cone $\NC$ either.
  We claim that the intersection of $U$ and $\NC$ consists of the four
  lines given by $t_1 = \pm\ci$, $t_2 = \pm\ci$. Indeed, they are null
  lines. Set, for example, $z(t_2) \coloneqq (\ci-h_1)(t_2-h_2)$,
  $x \coloneqq z(0)$ and $y \coloneqq z(1)$. In view of
  \autoref{lem:1}, we have to verify $x\cj{x} = y\cj{y} = 0$. But this
  follows from
  \begin{equation*}
    \begin{aligned}
      z(t_2)\cj{z(t_2)} &= (\ci-h_1)(t_2-h_2)(t_2-\cj{h_2})(\ci-\cj{h_1}) \\
              &= (t_2-h_2)(t_2-\cj{h_2})(\ci-h_1)(\ci-\cj{h_1}) \\
              &= (t_2-h_2)(t_2-\cj{h_2})(-\ci^2-\ci(h_1+\cj{h_1})+h_1\cj{h_1}) \\
              &= (t_2-h_2)(t_2-\cj{h_2})(-1-0+1) = 0.
    \end{aligned}
  \end{equation*}
  Here, we used the fact that $(t_2-h_2)(t_2-\cj{h_2})$ is a real
  number and thus commutes with all other factors. The cases
  $t_1=-\ci$, $t_2=\pm\ci$ are similar so that we have verified all
  conditions of \autoref{th:2}.
\end{proof}

The proof of sufficiency is more involved. We need two additional
lemmas from projective geometry which are formulated and proved in
\autoref{sec:appendix}.

\begin{lemma}
  \label{lem:3}
  A three-space $U$ that satisfies all conditions of \autoref{th:2} is
  a 2R space.
\end{lemma}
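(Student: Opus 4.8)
The plan is to reverse-engineer the parameterisation \eqref{eq:7} from the geometric data. Starting from a three-space $U$ satisfying the three conditions, I would first exploit the null quadrilateral $\{\ell_0,\ell_1,\ell_2,\ell_3\}$. Since $U \cap \EG = \emptyset$, no edge of the quadrilateral lies in $\EG$, and the regular ruled quadric $\QQ = U \cap \SQ$ carries two families of rulings; each null line is in particular a line of $\SQ$ lying in $U$, hence a ruling of $\QQ$. The four intersection points of consecutive edges are then four points of $\QQ$, and a spatial quadrilateral inscribed in a regular ruled quadric must have its edges alternating between the two rulings families (two ``horizontal'' and two ``vertical'' lines forming a skew quadrilateral). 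I would choose the four vertices as the base points of a projective coordinate frame in $U \cong P^3$, so that $\QQ$ acquires the equation $x_0x_3 - x_1x_2 = 0$ and the four null lines become $x_1 = 0$, $x_3 = 0$, $x_0 = 0$, $x_2 = 0$ intersected with $\QQ$ — mirroring exactly the situation at the end of the proof of \autoref{lem:2}.

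Next I would pick a regular point of $\QQ$ not on the quadrilateral, normalise its representative dual quaternion, and — because $U \cap \EG = \emptyset$ guarantees a nonzero primal part everywhere on $U$ — use the transformation group of \autoref{th:1} (specifically a right translation) to move this point to $[1]$; this is legitimate precisely because those transformations fix $\SQ$, $\NC$, and $\EG$, hence preserve all three hypotheses. Now the two rulings of $\QQ$ through $[1]$ can be written as pencils $[1] \vee [m_i]$, giving a parameterisation $R(t_1,t_2) = (t_1 - h_1)(t_2 - h_2)$ up to recoordinatising $t_1, t_2$, where $[h_1], [h_2]$ are suitable points on these rulings. The content is then to show $h_1, h_2$ can be scaled so that $h_1\cj{h_1} = h_2\cj{h_2} = 1$ and $h_1 + \cj{h_1} = h_2 + \cj{h_2} = 0$, i.e. that they are genuine half-turns, and that $h_1 h_2 \neq h_2 h_1$. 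The normalisation $h_i\cj{h_i} = 1$ comes from $[h_i] \in \SQ \setminus \EG$; the vanishing of the scalar part $h_i + \cj{h_i}$ is forced by demanding that the null lines $t_i = \pm\ci$ actually lie in $\NC$ — running the computation in \autoref{lem:2} backwards, $z(t)\cj{z(t)} = 0$ for $t = \pm\ci$ forces $-1 - \ci(h_i + \cj{h_i}) + h_i\cj{h_i} = 0$ in the appropriate factor, and since this must hold for both signs of $\ci$ we get both $h_i\cj{h_i} = 1$ and $h_i + \cj{h_i} = 0$ simultaneously. Here is where the two projective-geometry lemmas from \autoref{sec:appendix} enter: they are needed to guarantee that the four null lines we are handed are exactly the four lines $t_1 = \pm\ci$, $t_2 = \pm\ci$ of this parameterisation and not some other configuration, and to pin down the correspondence between the quadrilateral's two pairs of opposite edges and the two parameters.

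Finally, non-commutativity $h_1 h_2 \neq h_2 h_1$ follows because otherwise, as noted in \autoref{lem:2}, the axes of the two half-turns would coincide or be orthogonal and coplanar; in the first case $\QQ$ would degenerate (the two ruling families would collapse), contradicting regularity, and in the second case the primal parts of $h_1, h_2$ would be dependent, forcing $U \subset \SQ$ or $U \cap \EG \neq \emptyset$ — both excluded. With $h_1, h_2$ now established as the direction data of two non-coplanar revolute axes (the axes are recovered from the vector parts of the $h_i$, which encode a line in $\R^3$ via the standard dictionary), $R(t_1, t_2)$ is literally the constraint variety of the 2R dyad built from those two axes, so $U$ is a 2R space. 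I expect the main obstacle to be the bookkeeping that identifies the given abstract null quadrilateral with the specific four null lines of the standard parameterisation — getting the incidence pattern and the real-structure (which pairs of edges are complex-conjugate) to match up, which is exactly what the appendix lemmas are designed to handle; once that identification is in place, the algebra forcing the half-turn normalisation is short.
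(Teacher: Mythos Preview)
Your proposal has a genuine gap at the step where you pass from ``two rulings of $\QQ$ through $[1]$'' to ``a parameterisation $R(t_1,t_2)=(t_1-h_1)(t_2-h_2)$''. That formula uses dual-quaternion \emph{multiplication}, so it lands in the span of $[1]$, $[h_1]$, $[h_2]$, $[h_1h_2]$. For this to parameterise $\QQ$ you need $[h_1h_2]\in U$ (or $[h_2h_1]\in U$), and that is exactly the statement to be proved---it does not follow from knowing the two rulings through $[1]$. Everything you do afterwards (``running \autoref{lem:2} backwards'' to force $h_i+\cj{h_i}=0$ and $h_i\cj{h_i}=1$) presupposes this parameterisation and so is circular. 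In fact the half-turn normalisation is not ``the content'' at all: on any rotation line through $[1]$ you can always pick the unique half-turn, and then the null-cone intersection of that single line is automatically at $t_i=\pm\ci$; this uses nothing about the null quadrilateral beyond its intersection with the two rulings through $[1]$, and it does not tell you that the \emph{entire} null edges coincide with the lines $t_i=\pm\ci$ of a dual-quaternion product parameterisation.

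The paper's proof addresses precisely this point. With $h_1,h_2$ (equivalently $m_1,n_1$) chosen on the rulings through $[1]$, it \emph{constructs} two candidate null quadrilaterals in $P^7$ with vertices $[m_in_j]$ and $[n_jm_i]$, verifies via \autoref{lem:1} that their edges are null lines, and then shows that one of these constructed quadrilaterals coincides with the given one. The matching is done in two steps: first project with centre $\EG$ onto $\DG$ (legitimate because $U\cap\EG=\varnothing$) and use the ruled structure of $\QQ'$ to match primal parts; then lift to the dual part using \autoref{lem:7} from the appendix, which is a uniqueness statement for spatial quadrilaterals on a quadric with prescribed projection and prescribed edge incidences. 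Only after this does one obtain $[h_1h_2]\in U$ (or $[h_2h_1]\in U$), and the parameterisation follows. Your description of the appendix lemmas as ``pinning down which null lines correspond to which parameter values'' understates what they do: they supply the missing implication $[h_1h_2]\in U$, without which there is no dual-quaternion product parameterisation of $\QQ$ at all.
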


\begin{proof}
  Denote the vertices of the null quadrilateral by $[u_1]$, $[v_1]$,
  $[u_2]$, $[v_2]$ such that consecutive points span the
  quadrilateral's edges. Again, it is no loss of generality to assume
  $[1] \in \QQ$. We denote the tangent hyperplane of $\SQ$ at $[1]$ by
  $\TO$. A point $[p'+\eps p'']$ is contained in $\TO$ if and only if
  $p''+\cj{p''} = 0$. Since $[1]$ does not lie on the null
  quadrilateral, there exist (possibly not unique) points
  \begin{gather*}
    [m_1] \in [u_1] \vee [v_1],\quad
    [n_1] \in [v_1] \vee [u_2],\\
    [m_2] \in [u_2] \vee [v_2],\quad
    [n_2] \in [v_2] \vee [u_1]
  \end{gather*}
  in the intersection of $U$ and $\TO$ such that
  \begin{itemize}
  \item $[m_1]$, $[m_2]$ and $[n_1]$, $[n_2]$ are pairs of complex
    conjugate points (whence their respective joins are real lines)
    and
  \item the quadrilateral with these points as vertices is planar and
    non-degenerate (\autoref{fig:null-lines}).
  \end{itemize}
  The real points of $[m_1] \vee [m_2]$ and $[n_1] \vee [n_2]$
  correspond to rotations about fixed axes. Pick real rotation
  quaternions $h_1$, $h_2 \neq 1$ such that
  $[h_1] \in [m_1] \vee [m_2]$ and $[h_2] \in [n_1] \vee [n_2]$. The
  axes of these rotations are the only candidates for our 2R dyad.
  Hence, we have to show that either $[h_1h_2] \in U$ or
  $[h_2h_1] \in U$. It is easy to see that this is the case if and
  only if either $[m_1n_1] \in U$ or $[n_1m_1] \in U$. In fact, we
  will even show that one of these products equals~$v_1$.

  \begin{figure}
    \centering
    \includegraphics{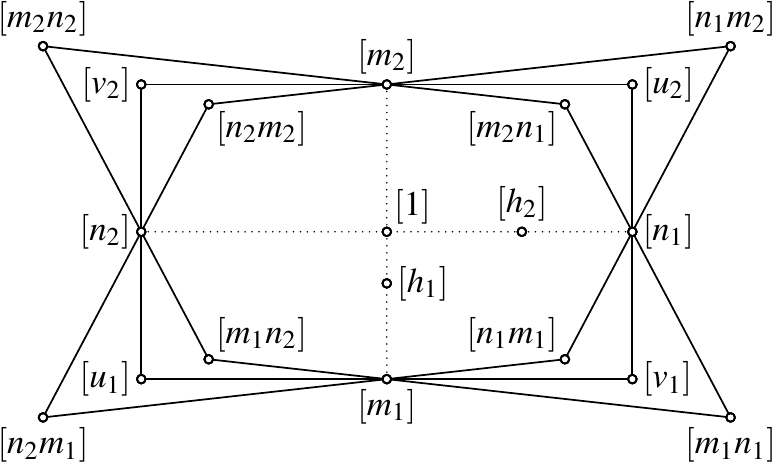}
    \caption{Null lines and null quadrilaterals in the proof of
      \autoref{lem:3}.}
    \label{fig:null-lines}
  \end{figure}

  We claim that both $M_1 \coloneqq [m_1] \vee [m_1n_1]$ and
  $M_2 \coloneqq [m_1] \vee [n_1m_1]$ are null lines. In order to show
  this, we have to verify the conditions of \autoref{lem:1}:
  \begin{gather*}
    (m_1n_1)\cj{(m_1n_1)} = m_1(\underbrace{n_1\cj{n_1}}_{=0})\cj{m_1} = 0,\\
    (n_1m_1)\cj{(n_1m_1)} = n_1(\underbrace{m_1\cj{m_1}}_{=0})\cj{n_1} = 0,\\
    m_1\cj{(m_1n_1)} + (m_1n_1)\cj{m_1} =
    m_1(\underbrace{n_1 + \cj{n_1}}_{\in \C})\cj{m_1} = (n_1+\cj{n_1})(\underbrace{m_1\cj{m_1}}_{=0}) = 0,\\
    m_1\cj{(n_1m_1)} + (n_1m_1)\cj{m_1} =
    (\underbrace{m_1\cj{m_1}}_{=0})\cj{n_1} + n_1(\underbrace{m_1\cj{m_1}}_{=0}) = 0.
  \end{gather*}
  Similarly, we see that also $N_1 \coloneqq [n_1] \vee [m_1n_1]$ and
  $N_2 \coloneqq [n_1] \vee [n_1m_1]$ are null lines. Thus, we are in
  the situation depicted in \autoref{fig:null-lines} where we have
  three null quadrilaterals with respective vertices
  \begin{gather*}
    [u_1], [v_1], [u_2], [v_2];\quad
    [m_1n_1], [m_2n_1], [m_2n_2], [m_1n_2];\\
    [n_1m_1], [n_1m_2], [n_2m_2], [n_2m_1].
  \end{gather*}
  The second and third quadrilateral are different because $m_1$ and
  $n_1$ do not commute (otherwise they would lie on the same line
  through $[1]$ which contradicts the regularity of the quadric
  $\QQ \coloneqq U \cap \SQ$). Our proof will be finished as soon as we
  have shown that the first and the second or the first and the third
  quadrilateral are equal. For this, it is sufficient to show that
  $[v_1] = [m_1n_1]$ or $[v_1] = [n_1m_1]$.

  At first, we argue that the primal part of $[v_1]$ equals the primal
  part of $[m_1n_1]$ or of $[n_1m_1]$. Because $U$ does not intersect
  $\EG$, the projection on the primal part is a regular projectivity
  $U \to [\H]$ with centre $\EG$. We denote projected objects by a
  prime, that is, we write $u'_1$, $v'_1$, $m'_1$ etc., for the primal
  parts of $u_1$, $v_1$, $m_1$ etc. The quadric $\QQ$ is regular and
  ruled and so is its primal projection $\QQ'$. Hence, the point
  $[m'_1] \in \QQ'$ is incident with precisely two lines contained in
  $\QQ'$, say $M'_1$ and $M'_2$. But $[m'_1] \vee [v'_1]$ is contained
  in $\QQ'$. Hence $[v'_1] \in M'_1$ or $[v'_1] \in M'_2$.  Similarly,
  $[v'_1]$ is also incident with one of the two lines contained in
  $\QQ'$ and incident with $[n'_1]$. Thus, $[v'_1] = [m'_1n'_1]$ or
  $[v'_1] = [n'_1m'_1]$.

  Now we have to lift this result to the dual part and show that
  $[v_1] = [m_1n_1]$ or $[v_1] = [n_1m_1]$. The alternative being
  similar, we assume $[v'_1] = [m'_1n'_1]$. This means that we have
  two null quadrilaterals, one with vertices $[u_1]$, $[v_1]$,
  $[u_2]$, $[v_2]$ and one with vertices $[m_1n_1]$, $[m_2n_1]$,
  $[m_2n_2]$, $[m_1n_2]$ such that their primal projections are equal
  and corresponding sides intersect in the vertices $[m_1]$, $[n_1]$,
  $[m_2]$, $[n_2]$ of a planar quadrilateral. Now we wish to apply
  \autoref{lem:7} in \autoref{sec:appendix} with $E = \EG$, $F = \DG$,
  and $\QQ = \SQ$ in order to conclude $[v_1] = [m_1n_1]$. This is
  admissible if and only if the plane
  $L \coloneqq [1] \vee [h_1] \vee [h_2] = [1] \vee [m_1] \vee [n_1]$
  does not intersect the four-spaces $[u_1] \vee \EG$,
  $[v_1] \vee \EG$, $[u_2] \vee \EG$, and $[v_2] \vee \EG$. We
  consider the four-space $[v_1] \vee \EG = [m_1n_1] \vee \EG$. It
  does not intersect $L$ if and only if the linear combination
  \begin{equation}
    \label{eq:8}
    \alpha \cdot 1 + \beta m_1 + \gamma n_1 + \delta m'_1n'_1 +
    \sum_{\ell=0}^3 \varphi_\ell w_\ell = 0
  \end{equation}
  with some basis $(w_0,w_1,w_2,w_3)$ of $\eps\H$ is trivial. By
  considering only primal parts, we see that \eqref{eq:8} implies
  \begin{equation*}
    \alpha \cdot 1 + \beta m'_1 + \gamma n'_1 + \delta m'_1n'_1 = 0.
  \end{equation*}
  But the points $[1]$, $[m'_1]$, $[n'_1]$, $[m'_1n'_1]$ are vertices
  of a quadrilateral contained in a regular ruled quadric and
  therefore not co-planar. Thus $\alpha = \beta = \gamma = \delta = 0$
  and $\varphi_0 = \varphi_1 = \varphi_2 = \varphi_3 = 0$ follows. The
  arguments for the remaining four-spaces are similar.
\end{proof}

\section{A characterisation of RP, PR, and cylinder spaces}
\label{sec:characterisation2}

Now we proceed with a geometric characterisation of the images of RP
and PR dyads. It is tempting to view them as limiting case of RR dyads
where one of the revolute axes becomes ``infinite''. However, this is
not entirely justified because of the possibility of commuting joints
-- a phenomenon which is trivial for RR dyads but relevant for RP or
PR dyads. It happens precisely if the direction of the revolute axis
and translation direction are linearly dependent. These dyads are
special enough to deserve a name of their own. Since their
transformations are usually modelled by a ``cylindrical joint'', we
define:

\begin{definition}
  The projective span of the kinematic image of an RP or PR dyad is
  called an \emph{RP} or \emph{PR space} respectively. If rotation and
  translation commute, we speak of a \emph{cylinder space} or \emph{C
    space.}
\end{definition}

The following example demonstrates that RP and PR spaces cannot be
characterised by a limiting configuration of \autoref{th:3}.

\begin{example}
  \label{ex:2}
  Consider the three-space $U$ spanned by $[1]$, $[m_1]$, $[n_1]$ and
  $[s_1]$ where
  \begin{equation*}
    m_1 = \eps\qi,\quad
    n_1 = \ci + \qi,\quad
    s_1 = \eps(\ci + \qi + \qj + \ci\qk).
  \end{equation*}
  It is straightforward to verify that the line $[m_1] \vee [s_1]$ is
  a null line contained in $\EG$ and the line $[n_1] \vee [s_1]$ and
  its conjugate complex line are null lines not contained in $\EG$.
  The three-space $U$ contains all rotations about $\qi$ and all
  translations in direction of $\qi$. The corresponding RP space is
  generated by the regular quadric with parametric equation
  \begin{equation*}
    R(u,v) = (u - \qi)(v - \eps\qi)
  \end{equation*}
  where $u$ and $v$ both range in $\R \cup \{\infty\}$. This quadric
  is not contained in $U$ which is hence neither an RP nor a PR spac.
\end{example}

It is crucial to above example that $m_1$ and $n_1$ commute. This can
also be seen in our characterisation of RP and PR spaces where C
spaces require a special treatment.

\begin{theorem}
  \label{th:3}
  A three-space $U \subset P^7$ is an RP but not a C space if and only
  if it
  \begin{itemize}
  \item intersects the Study quadric in a regular ruled quadric~$\QQ$,
  \item intersects the exceptional three-space $\EG$ in a straight
    line~$e_1$, and
  \item contains a pair of conjugate complex null lines $\ell_1$,
    $\ell_2$ that intersect $e_1$ in points $[s_1]$, $[s_2]$
    respectively, such that $\fiberproj(\ell_1) \vee [s_1]$ and
    $\fiberproj(\ell_2) \vee [s_2]$ are right rulings of~$\EQ$.
  \end{itemize}
  RP spaces that are not C spaces are characterised by the same
  conditions but with ``left rulings'' instead of ``right rulings''.
\end{theorem}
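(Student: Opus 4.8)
The plan is to prove the ``RP but not C'' statement and to deduce the ``PR but not C'' one from it by applying the conjugation $\chi\colon[q]\mapsto[\cj{q}]$: as noted after \autoref{th:1}, $\chi$ is a projectivity fixing every quadric of the absolute pencil, fixing $\EG$ and $\EQ$, commuting with $\fiberproj$, but \emph{interchanging} left and right rulings of $\EQ$, and it carries the kinematic image of $R(u,v)=(u-h)(v-\eps\mathbf{d})$ to that of $(v+\eps\mathbf{d})(u+h)$, i.e.\ an RP space to a PR space; so the RP characterisation with ``right rulings'' yields the PR one with ``left rulings''. For the RP case I would prove necessity by a direct computation in a parameterisation, and sufficiency by reconstructing the dyad from the three conditions.

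For \emph{necessity}, write a general RP space as $R(u,v)=(u-h)(v-\eps\mathbf{d})$ with $h=h'+\eps h''$ a half-turn ($h',h''$ pure, $h'\cj{h'}=1$, $h'\perp h''$) and $\mathbf{d}\neq 0$ pure. Expanding, $U=\langle 1,h,\eps\mathbf{d},\eps h'\mathbf{d}\rangle$, and in these base points the image is the regular ruled quadric $x_0x_3=x_1x_2$; moreover $U\cap\EG=[\eps\mathbf{d}]\vee[\eps h'\mathbf{d}]=:e_1$, the lines $u=\pm\ci$ are conjugate complex null lines $\ell_1,\ell_2$ (because $(v-\eps\mathbf{d})\overline{(v-\eps\mathbf{d})}=v^2$ is central and $(\ci-h)(\ci-\cj{h})=0$), meeting $e_1$ in $[s_1]=[\eps(\ci-h')\mathbf{d}]$ and $[s_2]=[\eps(-\ci-h')\mathbf{d}]$, while $\fiberproj(\ell_1)=[\eps(\ci-h')]$. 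Since $[x]\mapsto[x\mathbf{d}]$ on $\DG$, equivalently $[\eps x]\mapsto[\eps x\mathbf{d}]$ on $\EG$, is a Clifford right translation, it fixes each right ruling of $\EQ$, and as $[s_1]$ is the image of $\fiberproj(\ell_1)$ under it, the two points share a right ruling. Finally $\fiberproj(\ell_1)=[s_1]$ would force $(\ci-h')\mathbf{d}\propto\ci-h'$, i.e.\ $\mathbf{d}\parallel h'$, which is exactly the C case; so outside it the relevant line is genuine.

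For \emph{sufficiency}, note $e_1,\ell_1,\ell_2\subseteq U\cap\SQ=\QQ$, so $e_1$ is a ruling of $\QQ$ and $\ell_1,\ell_2$ (meeting $e_1$) are rulings of the opposite family. As $\QQ$ carries the real line $e_1$ it has signature $(+,+,-,-)$, so by \autoref{th:1} I may normalise a real point of $\QQ\setminus\EG$ to $[1]$; let $g_1,g_2$ be the rulings of $\QQ$ through $[1]$ in the family of $e_1$, resp.\ of $\ell_1$. Then $g_1\cap\EG\subseteq g_1\cap e_1=\emptyset$, and since containment of $[1]\vee[h]$ in $\SQ$ forces the $\eps$-coefficient of $h$ to vanish, $g_1$ is a one-parameter group of rotations about a fixed axis with half-turn direction $[h]$, $h'$ pure unit; likewise $g_2$ meets $e_1$ in one real point $[\eps\mathbf{d}]$ with $\mathbf{d}$ pure and is a translation group. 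Set $U':=\langle 1,h,\eps\mathbf{d},\eps h'\mathbf{d}\rangle$, the span of $(u-h)(v-\eps\mathbf{d})$; since $[s_1]:=\ell_1\cap e_1=[\eps s_1']$ is non-real (its conjugate $[s_2]$ is a distinct point of the second family), the points $[1],[h],[\eps\mathbf{d}],[s_1]$ span $U$, and $U=U'$ reduces to $s_1'\in\langle\mathbf{d},h'\mathbf{d}\rangle_{\C}$.

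To establish this, work in $\H\otimes\C\cong M_2(\C)$, where $\EE$ (hence $\EQ$ through $\fiberproj$) is the rank-one locus and right rulings are the sets of matrices with a fixed column space. The point $\ell_1\cap g_1=[\alpha\cdot 1+\beta h]$ lies on $\NC$, so $\alpha^2+\beta^2=0$, hence (choosing labels) $\fiberproj(\ell_1)=[\eps(\ci-h')]$; the third hypothesis then says $s_1'$ and $\ci-h'$ share a column space, i.e.\ $s_1'=(\ci-h')p$ for some $p$, so in an eigenbasis of $h'$ the matrix $s_1'$ has one zero row and $s_2'$ the other. Combined with $[\eps\mathbf{d}]\in e_1=[s_1]\vee[s_2]$ and $\mathbf{d}$ real, which gives $\mathbf{d}=\mu s_1'+\cj{\mu}s_2'$, a short matrix computation yields $s_1'=\tfrac{1}{2\mu}(\mathbf{d}+\ci\,h'\mathbf{d})\in\langle\mathbf{d},h'\mathbf{d}\rangle_{\C}$ (equivalently $[s_1]=[\eps(\ci-h')\mathbf{d}]$), so $U=U'$; the same identity shows $h'\mathbf{d}\neq\mathbf{d}h'$, so $U'$ is not a C space. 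I expect the main obstacle to be keeping the left/right bookkeeping straight: one must set up $M_2(\C)$, its real structure, and the eigenbasis of $h'$ so that ``right ruling of $\EQ$'' translates into ``$s_1'$ is a \emph{left} multiple of $\ci-h'$'', which is precisely what makes the final membership land in $\langle\mathbf{d},h'\mathbf{d}\rangle_{\C}$ — with $h'$ on the left, as in the factor order of $R(u,v)$ — rather than in $\langle\mathbf{d},\mathbf{d}h'\rangle_{\C}$, which is the PR case; a wrong sign or side swaps RP and PR.
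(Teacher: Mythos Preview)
Your necessity argument and the reduction of the PR case to the RP case via $\chi$ match the paper's Lemma~4 essentially verbatim. Your sufficiency argument, however, is genuinely different from the paper's Lemma~5 and is correct.

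The paper's route for sufficiency is indirect: it finds $[m_1]\in e_1$ and $[n_1]\in\ell_1$ so that $[1]\vee[m_1]$ and $[1]\vee[n_1]$ are rulings of $\QQ$, and then invokes the projection machinery developed for the RR case (Lemma~3 together with the appendix Lemma~7) to conclude $[s_1]\in\{[m_1n_1],[n_1m_1]\}$, i.e.\ that $U$ is an RP or a PR space. The right-ruling part of the third hypothesis is then used only to rule out the commuting (C) case and, implicitly via Lemma~4, to decide between RP and PR. By contrast, you bypass the projection lemmas entirely: once you have $\fiberproj(\ell_1)=[\eps(\ci-h')]$, you translate ``$\fiberproj(\ell_1)\vee[s_1]$ is a right ruling'' into ``$s_1'$ and $\ci-h'$ share a column space in $M_2(\C)$'', diagonalise $h'$, and read off $h's_1'=-\ci s_1'$, $h's_2'=\ci s_2'$, whence $s_1'=\tfrac{1}{2\mu}(\mathbf d+\ci\,h'\mathbf d)\in\langle\mathbf d,h'\mathbf d\rangle_\C$ directly. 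This is shorter, self-contained, and makes the role of the ruling hypothesis fully explicit; the paper's approach has the complementary virtue of reusing the RR machinery uniformly.

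Two small remarks. First, your appeal to \autoref{th:1} to ``normalise a real point of $\QQ\setminus\EG$ to $[1]$'' is really just left-multiplication by the inverse of that point; \autoref{th:1} is not needed. Second, your caveat about left/right bookkeeping is well placed: the verification that $s_2'$ has the \emph{other} zero row (equivalently, that the real structure on $\H\otimes\C$ preserves the family of right rulings) is what makes $h's_2'=\ci s_2'$ hold, and this is the step where a sign or side error would silently swap RP and PR.
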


\begin{figure}
  \centering
  \includegraphics{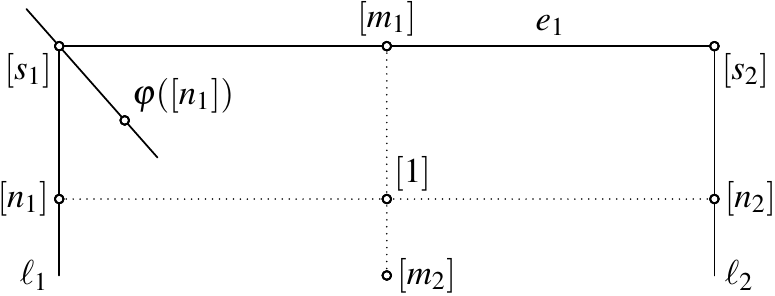}
  \caption{Lines in the proof of \autoref{th:4} (\autoref{lem:4} and
    \autoref{lem:5}).}
  \label{fig:null-lines2}
\end{figure}

Necessity of the first and second condition of \autoref{th:3} have
been proved in the master thesis by Stigger \cite{stigger15} by means
of Gröbner basis computations. Here, we give an independent proof.

\begin{lemma}
  \label{lem:4}
  The conditions of \autoref{th:3} are necessary for an RP or PR
  space that is not a C space.
\end{lemma}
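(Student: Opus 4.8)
The plan is to mimic the structure of the proof of \autoref{lem:2} but keep careful track of which of the two revolute axes degenerates to a prismatic joint. Since the $U$ attached to an RP dyad (composition of a rotation about a finite axis $\ell_1$, followed by a translation in a direction $d$, say) always contains the identity displacement, I would \emph{not} normalise $[1]\in\QQ$ in the statement but, for the computation inside the proof, work with the parameterisation
\begin{equation*}
  R(t_1,t_2) = (t_1 - h_1)(t_2 - g_2),
\end{equation*}
where $h_1$ is a unit rotation quaternion with $h_1 + \cj{h_1} = 0$ (a half-turn about $\ell_1$) and $g_2 = 1 + \eps g$ with $g$ a purely vectorial quaternion encoding the translation (so $g_2\cj{g_2} = 1$, but now $g_2$ has \emph{vanishing} vectorial primal part rather than vanishing scalar primal part). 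Expanding, $R(t_1,t_2) = t_1 t_2 - t_1 g_2 - t_2 h_1 + h_1 g_2$, so $U = [1]\vee[h_1]\vee[g_2]\vee[h_1g_2]$, and in the induced coordinates $\QQ$ is again $x_0x_3 - x_1x_2 = 0$, a regular ruled quadric. This settles the first item. For the PR case one instead writes $R(t_1,t_2) = (t_1 - g_1)(t_2 - h_2)$ with the prismatic factor on the left; the bookkeeping is symmetric.

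Next I would compute $U \cap \EG$. A point of $U$ lies in $\EG$ iff the primal part of $R(t_1,t_2)$ vanishes; the primal part is $(t_1 - h_1)(t_2 - 1)$ (since the primal part of $g_2$ is $1$). This is zero precisely for $t_2 = 1$ (any $t_1$), giving the line $e_1 \coloneqq \{[(t_1 - h_1)\eps g]\colon t_1\in\C\cup\{\infty\}\} = [h_1 g_2 - g_2]\vee[\ldots]$, i.e.\ the line spanned by $[\eps g]$ and $[h_1\eps g]$ inside $\EG$ — it is genuinely a line, not a point, precisely because the dyad is not a C space (if rotation and translation commuted, $h_1\eps g = \eps g h_1$ would be a scalar multiple of $\eps g$ and the intersection would collapse to a point). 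So the second item holds, and the non-C-space hypothesis is exactly what is needed here. I would also note, as in \autoref{lem:2}, that $U\not\subset\SQ$ and $U\not\subset\NC$.

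For the third item I would follow the template of \autoref{lem:2}: the intersection $U\cap\NC$ is governed by the primal part $(t_1-h_1)(t_2-1)$, whose norm (using that $(t_2-1)$ is now \emph{complex scalar} when $t_1$ is fixed, hence central) is $(t_2-1)\cj{(t_2-1)}\,(t_1-h_1)(t_1-\cj{h_1}) = (t_2-1)^2(t_1^2 + 1)$ — here I use $h_1 + \cj{h_1} = 0$ and $h_1\cj{h_1} = 1$. This vanishes on the null cone either when $t_1 = \pm\ci$ (two complex lines $\ell_{1,2}$, images of $t_1 = \pm\ci$ with $t_2$ free) or when $t_2 = 1$ (the real transversal line $e_1 \subset\EG$, which meets both $\ell_1$ and $\ell_2$). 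So $\ell_1$, $\ell_2$ are a conjugate-complex pair of null lines meeting $e_1$ in $[s_{1,2}] \coloneqq [R(\pm\ci,1)] = [(\pm\ci - h_1)\eps g]$. That $\ell_1,\ell_2$ are null lines I would verify exactly as in \autoref{lem:2}, via \autoref{lem:1}, evaluating $z(t_2)\cj{z(t_2)}$ at two values of $t_2$ with $t_1 = \ci$ fixed. It remains to pin down the ruling type. Here I would compute $\fiberproj(\ell_1)$: along $\ell_1$, $R(\ci,t_2) = (\ci - h_1)(t_2 - g_2)$, whose primal part is $(\ci - h_1)(t_2 - 1)$, so $\fiberproj(\ell_1)$ is the line $[(\ci - h_1)\eps]\vee[(\ci - h_1)\eps]$ through $[\eps(\ci - h_1)]$ — actually a \emph{single point} $[\eps(\ci-h_1)]$ since the primal direction is constant up to the scalar $(t_2-1)$. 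Then $\fiberproj(\ell_1)\vee[s_1] = [\eps(\ci - h_1)]\vee[(\ci-h_1)\eps g] = [(\ci-h_1)\eps]\vee[(\ci-h_1)\eps g]$, which is the image under left multiplication by the fixed quaternion $(\ci - h_1)$ of the line $[\eps]\vee[\eps g]$ in $\EG$. The key observation is that \emph{left multiplication by a fixed quaternion maps $\EE$-rulings of one kind to $\EQ$-rulings of the same kind}: by the discussion after \eqref{eq:2}, $\lM{\cdot}$ is a Clifford left translation on $\DG$, which fixes every left ruling; combined with the fiber projectivity identifying left rulings of $\EE$ with left rulings of $\EQ$ (as stated after \autoref{def:1}), this shows $\fiberproj(\ell_1)\vee[s_1]$ is a ruling of $\EQ$ of a definite type, and tracking through whether the prismatic factor sits on the \emph{right} (RP) or the \emph{left} (PR) of the product determines whether it is a right or a left ruling. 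The asymmetry between $\rM{\cdot}$ and $\lM{\cdot}$ — one fixes right rulings, the other left rulings — is precisely what distinguishes RP from PR, and this is where I expect the main subtlety to lie: getting the "right ruling vs.\ left ruling" dictionary to come out consistent with the convention fixed in \autoref{th:1} and \autoref{ex:1}. The conjugate case $t_1 = -\ci$ and the entire PR computation are mirror images, obtained by applying quaternion conjugation $\chi$, which (as noted in the text) swaps left and right rulings — consistent with the final sentence of the theorem.
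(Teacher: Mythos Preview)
Your overall strategy matches the paper's: parameterise the dyad, read off the quadric $\QQ$, compute $U\cap\EG$ and $U\cap\NC$ explicitly, and then identify the ruling type. The paper uses the slightly cleaner translation factor $(v-\eps p)$ instead of your $(t_2-1-\eps g)$, so that $e_1$ shows up at $v=0$ rather than $t_2=1$, but that is cosmetic. Two points, however, are genuine gaps.

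\emph{Where the non-C hypothesis enters.} Your parenthetical claim that in the C-space case ``$h_1\eps g$ would be a scalar multiple of $\eps g$ and the intersection would collapse to a point'' is false. One has $h_1\eps g=\eps h_1'g$, and $h_1'g$ is \emph{never} a scalar multiple of $g$ (that would force the pure vectorial $h_1'$ to be a scalar). In the commuting case $h_1'\parallel g$ one gets $h_1'g\in\R$, so $e_1=[\eps g]\vee[\eps]$ --- still a line. Thus the second item of \autoref{th:3} holds for C spaces too, and the non-C hypothesis is \emph{not} what makes $e_1$ a line. In the paper the hypothesis $[h']\neq[p]$ is used only at the very end, to guarantee that $\fiberproj(\ell_1)=[\eps(\ci-h')]$ and $[s_1]=[\eps(\ci-h')p]$ are distinct points and hence actually span a ruling; when $[h']=[p]$ one computes $(\ci-h')h'=-\ci(\ci-h')$, so the two points coincide.

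\emph{The ruling-type argument.} Rewriting $\fiberproj(\ell_1)\vee[s_1]$ as the image of $[\eps]\vee[\eps g]$ under left multiplication by $(\ci-h_1')$ does not help: the line $[\eps]\vee[\eps g]$ is not on $\EQ$ (the point $[\eps]$ fails $d\cj d=0$), and $(\ci-h_1')$ is a zero divisor ($(\ci-h_1')(\ci+h_1')=0$), so it does not induce a Clifford translation at all. The paper's argument is the one you should use: observe instead that $[s_1]=[\eps(\ci-h_1')g]$ is obtained from $\fiberproj(\ell_1)=[\eps(\ci-h_1')]$ by \emph{right} multiplication by the pure quaternion $g$. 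Since $[\eps(\ci-h_1')]\in\EQ$, this immediately exhibits $\fiberproj(\ell_1)\vee[s_1]$ as a right ruling by the convention fixed after~\eqref{eq:2} (compare \autoref{ex:1}, where left multiplication produced left rulings). The PR case then follows, as you say, by applying~$\chi$.
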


\begin{proof}
  Assuming without loss of generality $[1] \in U$, the constraint
  variety of an RP chain can be parameterised as
  \begin{equation}
    \label{eq:9}
    R(u,v) = (u - h)(v - \eps p),
    \quad
    u \in \R \cup \{\infty\},\
    v \in \R \cup \{\infty\}
  \end{equation}
  with a non-zero quaternion $p$ with $p + \cj{p} = 0$ and a dual
  quaternion $h$ that satisfies $h\cj{h} = 1$, $h + \cj{h} = 0$. Since
  it is not the constraint variety of a PR chain, the primal part $h'$
  of $h$ and $p$ do not commute, that is $[h'] \neq [p]$.

  By the same arguments as in \autoref{lem:2}, this constraint variety
  is a regular quadric: The parameters $u$ and $v$ range in
  $\R \cup \{\infty\}$ (with $\infty$ corresponding to zero rotation
  angle or translation distance) but we will also admit complex values
  for them. Expanding \eqref{eq:9} yields
  $R(u,v) = uv - vh - u\eps p + \eps hp$. We see that the kinematic
  image of this RP dyad lies in the three-space spanned by $[1]$,
  $[h]$, $[\eps p]$ and $[\eps hp] = [\eps h'p]$ and is a regular
  quadric.

  Clearly, this quadric it contains the line
  $e_1 \coloneqq [\eps p] \vee [\eps hp] \subset \EG$. Moreover, the
  lines $\ell_1$, $\ell_2$ parameterised by $R(\pm\ci, v)$ are
  conjugate complex null lines. The intersection points $[n_1]$, $[n_2]$
  of these lines with $\TO$ correspond to $v = \infty$, their
  intersection points $[s_1]$, $[s_2]$ with $\EG$ correspond to $v = 0$:
  \begin{equation*}
    n_1 = \ci - h,\
    n_2 = -\ci - h,\
    s_1 = -\eps(\ci - h)p,\
    s_2 = -\eps(-\ci - h)p
  \end{equation*}
  (\autoref{fig:null-lines2}). In view of \autoref{lem:1} the null
  line property follows from
  \begin{equation*}
    \begin{aligned}
    s_1\cj{n_1} + n_1\cj{s_1}
    &= -\eps(\ci-h)p(\ci-\cj{h}) + (\ci-h)\cj{p}(-\eps(\ci-\cj{h})) \\
    &= -(\ci-h)\eps(\underbrace{\cj{p}+p}_{=0})(\ci-\cj{h}) = 0
    \end{aligned}
  \end{equation*}
  and similar with $\ci$ replaced by $-\ci$. Moreover
  $\fiberproj([\ell_1]) = \fiberproj([n_1]) = [\eps(\ci - h')]$ and
  $[s_1] = [\eps(\ci-h')p]$. Because of $[h'] \neq [p]$, these points
  do not coincide and, indeed, span a right ruling of~$\EQ$.

  The statements for PR spaces follow from \autoref{th:1} by
  application of the conjugation map $\chi$. It interchanges RP spaces
  with PR spaces and right rulings with left rulings but fixes the
  Study quadric and the null cone.
\end{proof}

The proof of sufficiency of the conditions in \autoref{th:3} is, in
large parts, a copy of \autoref{lem:3}. However, at one point the
third condition comes into play.

\begin{lemma}
  \label{lem:5}
  A three-space $U$ that satisfies all conditions of \autoref{th:3} is
  a PR or RP space but not a C space.
\end{lemma}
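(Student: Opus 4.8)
The plan is to mimic the structure of the proof of \autoref{lem:3} as closely as possible, since the setup is nearly identical: we have a three-space $U$ intersecting $\SQ$ in a regular ruled quadric $\QQ$, and we must reconstruct a revolute axis and a translation direction whose induced RP (or PR) constraint variety spans $U$. As before, I would assume without loss of generality $[1]\in\QQ$ and denote by $\TO$ the tangent hyperplane of $\SQ$ at $[1]$, characterised by $p''+\cj{p''}=0$ for $[p'+\eps p'']$. The line $e_1=U\cap\EG$ and the conjugate complex null lines $\ell_1,\ell_2$ play the role previously played by the null quadrilateral; their intersection points $[s_1]=\ell_1\cap e_1$, $[s_2]=\ell_2\cap e_1$ are given, and I would pick points $[n_1]\in\ell_1\cap\TO$, $[n_2]\in\ell_2\cap\TO$ (complex conjugates of each other), together with a real rotation quaternion $h\neq 1$ with $[h]\in[n_1]\vee[n_2]$, so that $n_1=\ci-h$ up to scaling. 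The candidate translation direction $p$ is read off from $e_1$: since $e_1\subset\EG$ is a real line meeting $\ell_1$ in $[s_1]$ and $[s_1]=[\eps(\ci-h')p\,]$ up to a factor (this is where one uses that $[s_1]\in\ell_1$ whose primal-free points are of the form $\eps(\ci-h)p$), one recovers $[p]$, and the hypothesis that $\fiberproj(\ell_1)\vee[s_1]=[\eps(\ci-h')]\vee[s_1]$ is a \emph{right} ruling of $\EQ$ is exactly what forces $s_1$ to be of the form $\eps(\text{primal})\cdot p$ with $p$ \emph{right}-multiplied rather than left-multiplied --- i.e. it is what selects RP over PR.

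Next I would show the candidate constraint variety $R(u,v)=(u-h)(v-\eps p)$ has its span contained in $U$, or rather, that $U$ is spanned by $[1]$, $[h]$, $[\eps p]$, $[\eps h'p]$. As in \autoref{lem:3} the argument proceeds in two stages: first the \emph{primal} statement, then lifting to the dual part. For the primal part, since $U\cap\EG=e_1$ is a line (not a point), the primal projection $\fiberproj$ restricted to $U$ is no longer injective --- its kernel is exactly $e_1$ --- so $\QQ'=\fiberproj(\QQ)$ is a quadric in a three-space but now with $e_1$ collapsed; I expect $\QQ'$ to project to a (conic or) quadric on which $[1]$, $[h']$, $[\eps$-images lie, and the commutativity-failure hypothesis $[h']\neq[p]$ guarantees the relevant points are in ``general position'' so that $[h'p]$ is forced. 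The key point, exactly parallel to the passage in \autoref{lem:3} where the third condition of the theorem enters, is that the right-ruling hypothesis pins down which of $h'p$ or $p h'$ appears --- for RP it must be $h'p$ (rotation first on the right of the product as written, i.e. $(u-h)(v-\eps p)$), for PR it would be $p h'$ --- and this is where I would invoke the characterisation of left versus right Clifford translations from \autoref{th:1} and the fiber-projectivity machinery of \autoref{def:1}.

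For the lift to the dual part, I would again want to apply the projective-geometry lemmas from \autoref{sec:appendix} (\autoref{lem:7} and its companion), this time with $E=\EG$, $F=\DG$, $\QQ=\SQ$ but with the degenerate null-quadrilateral replaced by the pair of conjugate complex null lines through the common point configuration. The admissibility condition --- that the plane $L=[1]\vee[h]\vee[\eps p]$ (or the appropriate plane through the regular point and the two recovered one-parameter generators) does not meet the relevant four-spaces $[\,\cdot\,]\vee\EG$ --- I would check just as in \autoref{lem:3}, by passing to primal parts and using that $[1]$, $[h']$, $[\eps p]'$ etc.\ are not co-planar because they are vertices of a non-degenerate configuration on the regular ruled quadric $\QQ'$; here the hypothesis $[h']\neq[p]$ is what rules out the degeneration (which would be precisely the C-space case). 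Finally I would record that the conjugation map $\chi$ from \autoref{sec:transformation-group} converts the RP statement into the PR statement and swaps right rulings for left rulings, so the two halves of the theorem follow from one another.

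The main obstacle I anticipate is the lifting step: because $U$ now meets $\EG$ in a whole line rather than avoiding it, the clean ``regular projectivity $U\to\DG$ with centre $\EG$'' used in \autoref{lem:3} is unavailable, and one must work with the genuinely non-injective fiber projectivity. Getting the bookkeeping right --- tracking how the line $e_1$ sits inside $U$, which points it contains, and how the two generators of $\QQ$ through a chosen point interact with $e_1$ --- and verifying that the appendix lemmas still apply in this slightly degenerate setting (pair of conjugate complex null lines meeting a real line, rather than a proper null quadrilateral) is the delicate part. The right-ruling hypothesis is doing real work here and must be threaded through carefully; everything else is a routine adaptation of \autoref{lem:2} and \autoref{lem:3}.
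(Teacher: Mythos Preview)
Your overall plan --- adapt the proof of \autoref{lem:3} --- is exactly what the paper does, but you misplace both the resolution of your ``main obstacle'' and the role of the right-ruling hypothesis.

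The non-injectivity of the projection from $\EG$ is not handled by working with the degenerate fiber projectivity on $U$, as you propose. The paper simply replaces $\EG$ by a generic three-dimensional centre $Z$ that misses $U$ and for which the plane $[1]\vee[m_1]\vee[n_1]$ is complementary to the four-spaces $[u_i]\vee Z$, $[v_i]\vee Z$. With this one change, the argument of \autoref{lem:3} (including the appeal to \autoref{lem:7}) runs unchanged and yields $[s_1]=[m_1n_1]$ or $[s_1]=[n_1m_1]$, i.e.\ $U$ is an RP or a PR space. The right-ruling condition is \emph{not used here at all}; note that the lemma only claims ``RP or PR but not C'', and the separation of RP from PR is obtained afterwards by combining with \autoref{lem:4}, not inside this proof.

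The genuine gap in your plan is the commuting case. When $m_1$ and $n_1$ commute, the two candidate spatial quadrilaterals with vertices $[m_in_j]$ and $[n_jm_i]$ coincide, so the uniqueness from \autoref{lem:7} no longer forces $[s_1]$ to be one of their vertices, and the \autoref{lem:3} machinery simply stalls. Your outline would not detect this failure because you have already spent the ruling hypothesis trying to distinguish RP from PR in the generic case. The paper instead disposes of the commuting case by a short direct computation: normalising to $m_1=\eps\qi$, $n_1=\ci+\qi$ one gets $\fiberproj(\ell_1)=\fiberproj([n_1])=[\eps(\ci+\qi)]=[m_1n_1]=[s_1]$, so $[s_1]$ and $\fiberproj(\ell_1)$ fail to span a line --- contradicting the third condition of \autoref{th:3}. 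Thus the fiber-projectivity/ruling hypothesis serves solely to exclude the cylindrical case.

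A minor point: recovering the translation direction from the formula $[s_1]=[\eps(\ci-h')p]$ presupposes what you want to prove. It is cleaner (and is what the paper does) to take $[m_1]$ as the unique point of $e_1$ for which $[1]\vee[m_1]$ is a ruling of $\QQ$; then $m_1=\eps p$ with $p+\cj p=0$ automatically, and $[1]\vee[m_1]$ is your line of translations.
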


\begin{proof}
  Once more, we assume $[1] \in U$. The intersection of $U$ and $\SQ$
  is a regular quadric $\QQ$ that contains the straight line
  $e_1 = U \cap \EG$ and the two conjugate complex null lines
  $\ell_1$, $\ell_2$. There exist points $[m_1] \in e_1$ and
  $[n_1] \in \ell_1$ such that $[1] \vee [m_1]$ and $[1] \vee [n_1]$
  are rulings of $\QQ$. The points of the line $[1] \vee [m_1]$
  correspond to translations with fixed direction and the points of
  $[1] \vee [n_1]$ to rotations around a fixed axis. There exists an
  intersection point $[s_1] \coloneqq e_1 \cap \ell_1$ and we have to
  show that either $[s_1] = [m_1n_1]$ or $[s_1] = [n_1m_1]$. In
  general, this can be done just as in our proof of \autoref{lem:3}:
  Denote by $m_2$ the complex conjugate to $m_1$ and by $n_2$ the
  complex conjugate of $n_1$. Then
  $[1] = ([m_1] \vee [m_2]) \cap ([n_1] \vee [n_2])$, the
  quadrilaterals with respective vertices $[m_1n_1]$, $[m_2n_1]$,
  $[m_2n_2]$, $[m_1n_2]$ and $[n_1m_1]$, $[n_1m_2]$, $[n_2m_2]$,
  $[n_2m_1]$ are contained in $\QQ$, and the points $[m_1]$, $[n_1]$,
  $[m_2]$, and $[n_2]$ form a planar quadrilateral. Because the
  intersection of $\EG$ and $U$ is not empty, we have to use a
  projection with three-dimensional centre $Z$ that does not intersect
  $U$ (and hence differs from $\EG$) and is such that the plane
  $[1] \vee [m_1] \vee [n_1]$ is complementary to the four-spaces
  \begin{equation*}
    [u_1] \vee Z,\quad
    [v_1] \vee Z,\quad
    [u_2] \vee Z,\quad
    [v_2] \vee Z
  \end{equation*}
  when appealing to \autoref{lem:7}. Such a choice is possible and
  then the differences to the proof of \autoref{lem:3} are irrelevant
  \emph{unless $m_1$ and $n_1$ commute.} In this case our argument
  fails because the two spatial quadrilaterals coincide and we may no
  longer conclude that the projection of $[s_1]$ equals the projection
  of $[m_1n_1]$ or of $[n_1m_1]$. Fortunately, this case is already
  special enough to allow a straightforward computational treatment.

  The dual quaternions $m_1$ and $n_1$ commute ($m_1n_1 = n_1m_1$) if
  the translation direction of $m_1$ and the direction of the rotation
  axis of $n_1$ are linearly dependent. Without loss of generality, we
  may set $m_1 \coloneqq \eps \qi$ and $n_1 \coloneqq \ci + \qi$ so
  that $m_1n_1 = n_1m_1 = \eps(\ci\qi - 1)$. Now $U$ equals the span
  of $[1]$, $[m_1]$, $[n_1]$, and $[s_1] = [m_1n_1] = [n_1m_1]$ but
  $\fiberproj(\ell_1) = \fiberproj([n_1]) = [\eps(\ci+\qi)] =
  [m_1n_1]$ -- a contraction to the assumption that $[s_1]$ and
  $\fiberproj([n_1])$ span a straight line.
\end{proof}

Above proof already contains the basic idea for the still missing
characterisation of C spaces.

\begin{theorem}
  \label{th:4}
  A three-space $U \subset P^7$ is a C space if and only if it
  \begin{itemize}
  \item intersects the Study quadric in a regular ruled quadric $\QQ$,
  \item contains a pair of conjugate complex null lines $\ell_1$,
    $\ell_2$ and a real null line $e_1$, and
  \item satisfies $\fiberproj(\QQ) = e_1$.
  \end{itemize}
\end{theorem}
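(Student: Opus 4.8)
The plan is to prove Theorem~\ref{th:4} in two directions, mirroring the structure of \autoref{lem:4} and \autoref{lem:5} but taking advantage of the simplifications that arise when rotation and translation commute. For necessity, I would start from the parameterisation of a C~space. A cylindrical joint along an axis through the origin (no loss of generality after applying a transformation of \autoref{th:1}) is the composition of a rotation about a line $\ell$ and a translation in the direction of $\ell$; in dual quaternions this is $R(u,v) = (u-h)(v-\eps p)$ where now the primal part $h'$ of $h$ and $p$ \emph{are} linearly dependent, say $[h'] = [p]$. Expanding, $R(u,v) = uv - vh - u\eps p + \eps hp$, and since $h'$ and $p$ commute, $\eps hp = \eps h'p + \eps(h-h') p = \eps h' p$ lies in $\langle\eps p\rangle + \EG \cap \dots$; the point is that $[\eps hp]$ and $[\eps p]$ now both lie in $\EG$, so the quadric lies in the three-space spanned by $[1]$, $[h]$, $[\eps p]$, $[\eps hp]$, and $e_1 := [\eps p]\vee[\eps hp]\subset\EG$ is a real null line lying on $\QQ$. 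The conjugate complex null lines $\ell_1$, $\ell_2$ are obtained as before by setting $u = \pm\ci$. The new phenomenon is the third condition: computing $\fiberproj(R(u,v)) = [(v)\eps(u - h')]$ (up to the real scalar $v$; the $\eps p$ contributes nothing to the primal part), one sees the primal part of $R(u,v)$ is $uv - vh'$, so $\fiberproj([R(u,v)]) = [\eps(u - h')]$, and as $u$ varies these sweep out exactly the line $e_1$ because $[\eps h'] = [\eps p]$ and $[\eps\cdot 1]\cdot(\text{stuff})$ --- more carefully, $\fiberproj([1]) = [\eps]$, $\fiberproj([h]) = [\eps h'] = [\eps p]$, $\fiberproj([\eps p]) = 0$ is not a point, so one restricts $\fiberproj$ to $\QQ$ and checks it collapses onto $e_1$. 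This is the computational core of necessity.

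\textbf{Sufficiency.}
For the converse, I would assume the three conditions and recover the cylindrical structure. As in \autoref{lem:5}, set $[1]\in\QQ$ and pick $[n_1]\in\ell_1$ with $[1]\vee[n_1]$ a ruling of $\QQ$; the real points of $[n_1]\vee[n_2]$ give a rotation quaternion $h$ with $h + \cj h = 0$, $h\cj h = 1$. Since $e_1 = U\cap\EG$ is a real null line on $\QQ$, and $[1]\in\QQ$, there is a unique ruling of $\QQ$ through $[1]$ meeting $e_1$, hence a point $[m_1]\in e_1$ with $[1]\vee[m_1]$ a ruling; as $[m_1]\in\EG$, $[m_1] = [\eps p]$ for a quaternion $p$, and normalising we may take $p + \cj p = 0$. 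The decisive step is to show $[h']$ and $[p]$ are linearly dependent, i.e.\ the joint is genuinely cylindrical rather than a generic RP configuration that merely happens to have a null line in $\EG$. This is exactly where $\fiberproj(\QQ) = e_1$ is used: if $[h']\neq[p]$, then by the computation in \autoref{lem:4}, the fiber projection of the ruling $R(\pm\ci,v)$ together with the intersection point $[s_1] = e_1\cap\ell_1$ would span a genuine line (a right or left ruling of $\EQ$), which contradicts $\fiberproj(\QQ)$ being the \emph{single} line $e_1$ --- the fiber projection would have to be two-dimensional. Once $[h'] = [p]$ is established, $R(u,v) = (u-h)(v-\eps p)$ parameterises a quadric contained in $U$ (the span of $[1], [h], [\eps p], [\eps hp]$, which is all of $U$ by dimension count and the fact that these four points are independent --- independence follows since $\QQ$ is regular and ruled, so $[1],[n_1],[m_1]$ are not collinear, and $[\eps hp]\notin[1]\vee[n_1]\vee[m_1]$ because its primal part vanishes while the primal projection of that plane is two-dimensional). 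Hence $U$ is a C~space.

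\textbf{Main obstacle.}
The step I expect to require the most care is verifying, in the sufficiency proof, that the four points $[1]$, $[h]$, $[\eps p]$, $[\eps hp]$ actually span $U$ and that the commuting condition $[h'] = [p]$ is correctly forced by $\fiberproj(\QQ) = e_1$ rather than merely by the existence of a null line in $\EG$ (\autoref{ex:2} shows the latter is genuinely insufficient). One must argue that $\fiberproj$ restricted to $\QQ$ is, generically, a rational map onto a conic or a line, and that the hypothesis pins it down to a line; the cleanest route is probably to parameterise $\QQ$ by its two rulings, $[x] = [(1)(1) + s\,r_1 + t\,r_2 + st\,r_1r_2]$-style coordinates adapted to the null quadrilateral structure, apply $\fiberproj$, and read off that the image is a line precisely when the primal parts of the generators meeting $e_1$ are proportional. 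Unlike the RR and RP cases there is no appeal to \autoref{lem:7} here --- the commuting case collapses the two auxiliary null quadrilaterals into one, which is why a direct computation replaces the projective-geometry lemma, and keeping that computation coordinate-free enough to respect the transformation group of \autoref{th:1} (so that the theorem statement remains frame-independent) is the delicate bookkeeping.
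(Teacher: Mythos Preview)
Your necessity argument is essentially the paper's: parameterise the C~dyad, identify the three null lines, and compute the fiber projection of $\QQ$ directly. The paper simply normalises to $h=\qi+\eps h''$, $p=\qi$ at the outset, but the content is the same.

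For sufficiency your plan has the right architecture but the contradiction step is not sound as written. You claim that if $[h']\neq[p]$ then ``the fiber projection would have to be two-dimensional''. This is false: once $U\cap\EG$ is a line, $\fiberproj$ restricted to $U\setminus e_1$ always has one-dimensional image, namely $[\eps]\vee[\eps h']$ (the primal parts of $U$ are spanned by $1$ and $h'$ regardless of what $p$ is). So the dimension of $\fiberproj(\QQ)$ cannot detect whether $[h']=[p]$. You also invoke the computation of \autoref{lem:4} to say $\fiberproj(\ell_1)\vee[s_1]$ would be a ruling of $\EQ$, but \autoref{lem:4} presupposes that $\QQ$ is already the RP quadric $(u-h)(v-\eps p)$, which is exactly what you are trying to establish; the argument is circular at that point.

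What the condition $\fiberproj(\QQ)=e_1$ actually buys is not a dimension constraint but an \emph{identification}: since $[1]$ and $[n_1]$ lie on $\QQ$, their fiber images $[\eps]$ and $[\eps n_1']$ lie on $e_1$, so $e_1$ is determined as $[\eps]\vee[\eps h']$. Now $[m_1]\in e_1$ together with the Study condition on the ruling $[1]\vee[m_1]$ forces $[m_1]=[\eps h']$, i.e.\ $[p]=[h']$ directly---no contradiction argument needed. The paper carries this out in coordinates ($n_1=\ci+\qi$, hence $e_1=[\eps]\vee[\eps\qi]$, hence $[m_1]=[\eps\qi]$), and then the null-line condition $n_1\cj{s_1}+s_1\cj{n_1}=0$ pins down $[s_1]=[m_1n_1]$. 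Your ``main obstacle'' paragraph already gestures toward a direct computation of this kind; if you replace the contradiction argument by this identification step, your proof goes through and matches the paper's.
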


\begin{proof}
  Lets assume that $U$ is a C space. Without loss of generality, we
  may assume that it is spanned by $[1]$, $[m_1] = [\eps\qi]$,
  $[n_1] = [\ci + \qi]$ and $[s_1] = [m_1n_1] = \fiberproj([n_1])$.
  Clearly, it contains the real line $e_1 = [m_1] \vee [s_1]$, the
  null line $\ell_1 = [s_1] \vee [n_1]$, and its complex conjugate
  $\ell_2$. A parametric equation for the intersection quadric $\QQ$
  of $U$ and $\SQ$ is
  \begin{equation*}
    R(u,v) = (u - n_1)(v - m_1),
    \quad
    u \in \R \cup \{\infty\},\
    v \in \R \cup \{\infty\}.
  \end{equation*}
  It is a regular ruled quadric and its fiber projection equals
  $\fiberproj(\QQ)$ is parameterized by $\eps(u-s_1)$. This is
  indeed the straight line $e_1$.

  Conversely, assume that the condition of the theorem hold and,
  without loss of generality, $[1] \in U$. As in the proof of
  \autoref{lem:5}, we can find points $[m_1]$, $[n_1]$, $[s_1] \in U$
  such that $e_1 = [m_1] \vee [s_1]$, $\ell_1 = [n_1] \vee [s_1]$, and
  the lines $[1] \vee [m_1]$ and $[1] \vee [n_1]$ are rulings of
  $\QQ$. After a suitable choice of coordinates, we may assume $n_1 =
  \ci + \qi$ whence
  \begin{equation*}
    e_1 = \fiberproj([1]) \vee \fiberproj([n_1]) =
    [\eps] \vee [\eps(\ci + \qi)] = [\eps] \vee [\eps\qi].
  \end{equation*}
  But then $[m_1] = [\eps\qi]$ because $[m_1] \in e_1$ and
  $[1] \vee [m_1]$ is contained in $\SQ$. In particular,
  $[m_1n_1] = [n_1m_1] = [\eps(\ci\qi - 1)] = [\eps(\ci+\qi)]$.
  Moreover, there exist scalars $\alpha$ and $\beta$ such that
  $s_1 = \eps(\alpha + \beta\qi)$. The condition
  $n_1\cj{s_1} + s_1\cj{n_1} = 0$ implies $\alpha = \ci \beta$ so that
  $[s_1] = [\eps(\ci + \qi)] = [m_1n_1] = [n_1m_1]$ and $U$ is indeed a C space.
\end{proof}

\section{Extended Kinematic Mapping and Straight Lines}
\label{sec:example}

In this section we illustrate the usefulness of \autoref{th:4} at hand
of an example. In Pfurner et al. \cite{pfurner16} the authors consider
the map \eqref{eq:1} but without imposing the Study condition
(``extended kinematic map''). The action of a dual quaternion on
points is still a rigid body transformation but the map is no longer
an injection. Pfurner et al. (2016) show that the fibers of
\eqref{eq:1} (the set of pre-images of a fixed rigid body
displacement) is a straight line. More precisely, the fiber incident
with a point $[p] = [p' + \eps p''] \in P^7 \setminus \EG$ is the
straight line $[p' + \eps p''] \vee [\eps p''] = \fiberproj([p])$.
This line intersects the Study quadric in $[\eps p]$ and one further
point which gives the corresponding displacement in the classical
kinematic map (which also the underlying concept in this paper).

Pfurner et al. (2016) also mention that the motion corresponding to a
generic straight line in $P^7$ is a Darboux motion
\cite{bottema90,li15}, that is, the motion obtained by composing a
planar elliptic motion with a harmonic oscillation in orthogonal
direction. This is fairly straightforward to see. By virtue of
\eqref{eq:1}, all trajectories are rational curves degree two or less
which already limits possible candidates to Darboux motions, rotations
with fixed axis, and curvilinear translations along rational curves of
degree two or less.

We will now prove that the motion of a generic straight line $\ell$ is
actually a \emph{vertical} Darboux motion. In this limiting case of
the generic Darboux motion the elliptic translation is replaced by a
rotation with fixed axis. Our proof is based on the fact that the
three-space spanned by $\ell$ and its fiber $\fiberproj(\ell)$ is a C
space.

\begin{theorem}
  \label{th:5}
  If a straight line $\ell \subset P^7$ contains a point
  $[p] \notin \EG$ and is neither contained in the null cone $\NC$ nor
  in the four-dimensional subspace $\TS \coloneqq [p] \vee \EG$, the
  span of $\ell$ and $\fiberproj(\ell)$ is a C space.
\end{theorem}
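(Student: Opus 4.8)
The strategy is to verify, by a direct computation in suitable coordinates, that the three-space $V \coloneqq \ell \vee \fiberproj(\ell)$ satisfies the three conditions of \autoref{th:4}. First I would choose a convenient parametrisation: pick a point $[p] = [p' + \eps p''] \in \ell$ with $p \notin \EG$, so that $p' \neq 0$. After applying a coordinate change of the form \eqref{eq:3} (which preserves all the relevant geometry and maps C spaces to C spaces), I may normalise $p'$; the hypothesis $\ell \not\subset \TS$ guarantees that $\ell$ has a second point whose primal part is linearly independent from $p'$, so $\fiberproj$ restricted to $\ell$ is injective and $\fiberproj(\ell)$ is a genuine line in $\EG$. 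Thus $V$ is honestly three-dimensional, spanned by $\ell$ together with the line $\fiberproj(\ell) \subset \EG$.

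\textbf{Key steps.} (1) \emph{The intersection with $\EG$ is a line.} Since $\fiberproj(\ell) \subset \EG \cap V$ and $V$ is a three-space not contained in $\EG$ (because $\ell \not\subset \EG$), the intersection $V \cap \EG$ is exactly the line $e_1 \coloneqq \fiberproj(\ell)$; in particular $e_1$ is a real line, and one checks it is a null line since every line in $\EG$ lies in $\NC \subset \SQ$. (2) \emph{The fiber-projection condition.} By construction every point of $V$ is a combination of a point of $\ell$ and a point of $e_1 = \fiberproj(\ell) \subset \EG$; applying $\fiberproj$ (which kills $\EG$ and acts on $\ell$ with image $e_1$) shows $\fiberproj(V) = e_1$, hence a fortiori $\fiberproj(\QQ) = e_1$ for the intersection quadric $\QQ = V \cap \SQ$. (3) \emph{The Study intersection is a regular ruled quadric containing a conjugate-complex pair of null lines.} This is the computational heart. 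Parametrise $\ell$ as $[p + t q]$ for a second point $q$; then the Study condition $p\cj{d} + d\cj{p} = 0$ on $V$ becomes a quadratic equation in the three affine coordinates of $V$, and I would show (using $\ell \not\subset \NC$ to rule out degeneracies and $p'$, $q'$ linearly independent to rule out the quadric splitting or being a cone) that it defines a regular ruled quadric $\QQ$. Its intersection with $\NC$ (equivalently, with the plane-at-infinity-type locus $p\cj{p} = 0$ inside $V$) consists of $e_1$ together with two further lines which, because the discriminant is negative in the generic case, are complex conjugates $\ell_1, \ell_2$; these are null lines by \autoref{lem:1}.

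\textbf{Main obstacle.} The delicate point is \emph{step (3)}: showing that $\QQ = V \cap \SQ$ is \emph{regular} and \emph{ruled} rather than a cone, a pair of planes, or a non-ruled quadric, and that the residual intersection with $\NC$ genuinely produces a conjugate-complex pair and not two real lines or a double line. This is exactly where the hypotheses bite — $[p] \notin \EG$ keeps $\fiberproj|_\ell$ nondegenerate, $\ell \not\subset \NC$ prevents $\QQ$ from collapsing, and $\ell \not\subset \TS$ is what forces the three-space to be non-degenerate. I expect the cleanest route is to carry the normalised coordinates far enough that the matrix of $\QQ$ restricted to $V$ is explicit; then regularity is a nonvanishing $4\times 4$ determinant, the ruled/non-ruled dichotomy is read off its signature or the sign of a sub-discriminant, and "generic" can be made precise as the complement of this discriminant locus. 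Once all three bullets of \autoref{th:4} are checked, $V$ is a C space and the theorem follows; the promised consequence — that the extended kinematic image of a generic line is a vertical Darboux motion — is then immediate from the fact (noted after \autoref{ex:1}) that C spaces, being fixed by $\chi$ up to the RP/PR exchange which here is trivial, carry motions equal to their own inverses, and a self-inverse Darboux motion is vertical.
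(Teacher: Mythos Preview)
Your strategy is the same as the paper's: verify the three conditions of \autoref{th:4} for $V = \ell \vee \fiberproj(\ell)$. Steps (1) and (2) are fine, apart from the ``a fortiori'' in (2), which points the wrong way: $\fiberproj(V)=e_1$ only gives $\fiberproj(\QQ)\subset e_1$; you still need surjectivity, which follows once you know $e_1\subset\QQ$ and $\QQ$ is ruled (transversal rulings hit every primal-part ratio).

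The real difference is tactical, in step (3). You propose to take generic spanning points $[p],[q]$ on $\ell$, write down the $4\times4$ matrix of $\SQ|_V$, and check regularity and signature by brute force. The paper instead chooses the basis $[a],[b]=\ell\cap\NC$ together with $\fiberproj([a]),\fiberproj([b])$. Because $[1]\in\ell$, one may write $b=\cj{a'}-\eps a''$, and then the whole computation of $x\cj x$ collapses to expressions in the single real scalar $f={a'}^2+\cj{a'}^2$. The hypothesis $\ell\not\subset\TS$ translates directly into linear independence of $a'$ and $\cj{a'}$, hence $f\neq 0$, and this one inequality simultaneously gives the three-dimensionality of $V$, the regularity of $\QQ$, and the skewness of the two residual null lines $\ell_1,\ell_2$. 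Ruling out the pair-of-planes case is done geometrically by exhibiting a fourth ruling $[n_1]\vee[n_2]$ that misses $e_1$. This basis choice is what turns your ``computational heart'' into a few lines.

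Two phrasing issues to tighten. First, the conjugate-complex nature of $\ell_1,\ell_2$ is not a matter of a ``discriminant being negative in the generic case'': the null-cone form $x'\cj{x'}$ is positive definite on real quaternions, so any real line missing $\EG$ meets $\NC$ in a genuine pair of complex conjugate points, always. Second, drop the word ``generic'' entirely. The three hypotheses of \autoref{th:5} are exactly what is needed; there is no further discriminant locus to excise.
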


\begin{proof}
  Without loss of generality we assume $[p] = [1]$. Then $\TS$ is the
  space of all translations. The straight line $\ell$ is neither
  contained in nor tangent to the null cone $\NC$ (because it is real
  and does not intersect $\EG$). Hence, it intersects $\NC$ in two
  points $[a] = [a' + \eps a'']$, $[b] = [b' + \eps b'']$. They
  satisfy the null cone condition $a'\cj{a'} = b'\cj{b'} = 0$. Because
  $[1] \in [a] \vee [b]$, we may write $b = \cj{a'} - \eps a''$. Note
  that $a$ and $b$ are complex conjugates but we will not make direct
  use of this, thus avoiding possible confusion arising from mixing of
  complex and quaternion conjugation.

  If the vectors $a'$ and $\cj{a'}$ are linearly dependent, there
  exists a linear combination $\eps c''$ of $a$ and $b$ with
  $c'' \neq 0$. But then $[1] \neq [\eps c''] \in \TS$ and, contrary
  to our assumption, $\ell = [1] \vee [\eps c''] \subset \TS$. Hence,
  $a'$ and $\cj{a'}$ are linearly independent. This implies that the
  points $[a] = [a'+\eps a'']$, $[b] = [\cj{a'} - \eps a'']$,
  $\varphi([a]) = [\eps a']$, and $\varphi([b]) = [\eps \cj{a'}]$ span
  a three-dimensional space $U$.

  Now we compute the intersection of $U$ with $\SQ$ and $\NC$. We set
  $x = \alpha a + \beta \cj{a} + \eps(\gamma a' + \delta \cj{a'})$ and
  solve the equation $x\cj{x} = 0$ for $[\alpha,\beta,\gamma,\delta]$.
  The left-hand side is a dual number. Equating its primal part with
  zero yields
  \begin{equation*}
    \alpha\beta({a'}^2 + \cj{a'}^2) = 0.
  \end{equation*}
  Note that $f \coloneqq {a'}^2 + \cj{a'}^2$ is a real number and it
  is different from zero because of linear independence of $a'$ and
  $\cj{a'}$. Hence $\alpha = 0$ or $\beta = 0$. We further abbreviate
  \begin{equation*}
    g_1 \coloneqq a'\cj{a''}+a''\cj{a'},\quad
    g_2 \coloneqq \cj{a'}\,\cj{a''}+a''a'
  \end{equation*}
  (these are real numbers as well) and equate the dual part of
  $x\cj{x} = 0$ with zero:
  \begin{equation*}
    (\alpha\delta+\beta\gamma)f +
    \alpha(\alpha-\beta)g_1 +
    \beta(\alpha-\beta)g_2 = 0
  \end{equation*}
  In case of $\beta = 0$, this boils down to
  \begin{equation*}
    \alpha(\delta f + \alpha g_1) = 0
  \end{equation*}
  while we get
  \begin{equation*}
    \beta(\gamma f - \beta g_2) = 0
  \end{equation*}
  if $\alpha = 0$. From these considerations we infer that
  $U \cap \SQ \cap \NC$ consists of precisely three straight lines:
  The line $e_1$ given by
  $[\alpha,\beta,\gamma,\delta] = [0,0,\gamma,\delta]$ which is
  contained in the exceptional generator $\EG$, and the two lines
  $\ell_1$, $\ell_2$ given by
  \begin{equation*}
    [\alpha,\beta,\gamma,\delta] = [-f,0,\gamma,g_1],
    \quad\text{and}\quad
    [\alpha,\beta,\gamma,\delta] = [0,f,g_2,\delta]
  \end{equation*}
  respectively. Because of $f \neq 0$, the lines $\ell_1$ and $\ell_2$
  do not intersect and they are complex conjugates by construction.
  This already implies that $U \cap \SQ$ is a ruled quadric but
  neither a quadratic cone nor a double plane (but we still have to
  argue that it is not a pair of planes).

  \begin{figure}
    \centering
    \includegraphics{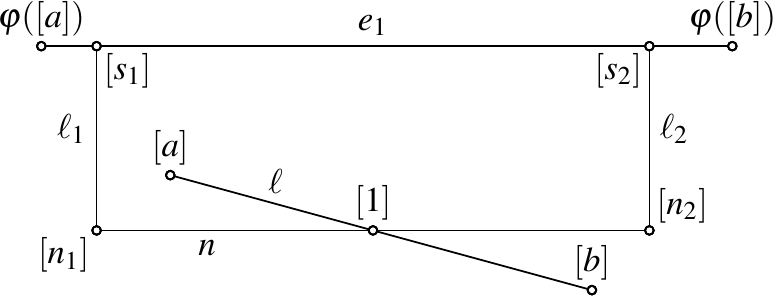}
    \caption{Lines in the proof of \autoref{th:5}.}
    \label{fig:darboux-space}
  \end{figure}

  For $\alpha = \beta = 0$ and $\gamma = \infty$ or $\delta = \infty$
  we obtain the intersection points
  \begin{equation}
    \label{eq:10}
    [s_1] = [\eps a']
    \quad\text{and}\quad
    [s_2] = [\eps \cj{a'}]
  \end{equation}
  of $\ell_1$ and $\ell_2$, respectively, with $e_1$. There is a unique
  line $n$ through $[1]$ that intersects both $\ell_1$ and $\ell_2$.
  Its intersection points $[n_1]$, $[n_2]$ with $\ell_1$ and $\ell_2$,
  respectively, are given by
  \begin{equation*}
    [\alpha,\beta,\gamma,\delta] = [-f,0,g_2,g_1]
    \quad\text{and}\quad
    [\alpha,\beta,\gamma,\delta] = [0,f,g_2,g_1].
  \end{equation*}
  The line $[n_1] \vee [n_2]$ is a fourth ruling in the intersection
  of $U$ and $\SQ$ and does not intersect $[e_1]$. Hence $U$ and $\SQ$
  cannot intersect in a pair of planes.

  The quadric $\QQ = \SQ \cap U$ is given by the equations
  $x = \alpha a + \beta \cj{a} + \eps(\gamma a' + \delta \cj{a'})$
  where the parameters $\alpha$, $\beta$, $\gamma$, and $\delta$
  satisfy $\alpha\delta + \beta\gamma = 0$. From this we see that that
  its fiber projection is given by $[\eps(\alpha a' + \beta\cj{a'})]$.
  It, indeed, coincides with $e_1$ so that also the last condition of
  \autoref{th:4} is fulfilled.
\end{proof}

\begin{corollary}
  \label{cor:1}
  A straight line $\ell$ satisfying the assumptions of \autoref{th:5}
  corresponds, via \eqref{eq:1}, to a \emph{vertical} Darboux motion.
\end{corollary}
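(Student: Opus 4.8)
The plan is to combine \autoref{th:5} with a look at the trajectories. First I would invoke \autoref{th:5}: the three-space $U \coloneqq \ell \vee \fiberproj(\ell)$ is a C space, so by the definition of a C space the quadric $\QQ \coloneqq U \cap \SQ$ is the kinematic image of a cylindrical dyad, i.e.\ it consists precisely of the screw displacements about one fixed line $g$. Next I would place the motion $\gamma$ that $\ell$ induces via \eqref{eq:1} inside $\QQ$: by the description of the fibers of the extended kinematic map given above, the displacement that \eqref{eq:1} assigns to a point $[p] \in \ell$ not on $\EG$ or $\NC$ is the classical kinematic image of the second intersection point of the fiber $[p] \vee \fiberproj([p])$ with $\SQ$, and that point lies on $\QQ$ because the whole fiber is contained in $U$. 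Hence every displacement occurring in $\gamma$ screws about the one fixed axis $g$.

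Then I would control the trajectories. Taking $[1] \in \ell$ without loss of generality and writing $p(t)$ for a parametrisation of $\ell$ that is affine-linear in $t$, formula \eqref{eq:1} exhibits the trajectory of an arbitrary point $[x]$ as $[p(t)\,x\,\cj{p(t)}]$, a rational curve of degree at most two, hence a (possibly degenerate) conic and in particular planar. A one-parameter motion all of whose displacements screw about a common axis $g$ and all of whose point paths are planar is a vertical Darboux motion --- planarity of every trajectory forces the translation distance along $g$ to be a harmonic function of the rotation angle about $g$ --- with pure rotations about $g$ and pure translations along $g$ occurring as degenerate limits. This yields the corollary.

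The step I expect to be the real obstacle is this last one: turning ``screws about a fixed axis with conic trajectories'' into a genuine characterisation of vertical Darboux motions rather than a merely plausible statement. One can secure it by citing the classical synthesis of Darboux motions (Bottema and Roth \cite{bottema90}), or self-containedly as follows. By \autoref{th:4} one has $\fiberproj(\QQ) = e_1 = U \cap \EG$, so $\gamma$'s kinematic image $\mathcal{C} \subset \QQ$ meets $\EG$ in two points lying on $e_1$ while $\fiberproj(\mathcal{C}) \subseteq e_1$; normalising $\mathcal{C}$ by an admissible transformation (\autoref{th:1}) to the curve $C(t)$ of \autoref{ex:1}, both $C(t) \cap \EG = \{[d_1],[d_2]\}$ and $\fiberproj(C(t)) \cap \EQ = \{[f_1],[f_2]\}$ then lie on the line $e_1$, which is not contained in $\EQ$; since a line meets $\EQ$ in at most two points, $\{[d_1],[d_2]\} = \{[f_1],[f_2]\}$, which by \autoref{ex:1} is precisely the case $a = 0$. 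The bookkeeping to watch is that the normalisation uses an \emph{admissible} transformation, so that all incidences with $\EG$ and $\EQ$ are preserved, and that the degenerate motions are subsumed consistently.
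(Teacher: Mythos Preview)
Your core argument is correct and mirrors the paper's two-line proof: bound the trajectory degree by two, invoke the classical trichotomy (Darboux / rotation / curvilinear translation), exclude translations by $\ell \not\subset \TS$, and exclude non-vertical Darboux motions by the conclusion of \autoref{th:5}. You spell out the one step the paper leaves implicit---that the kinematic image of $\ell$ actually lands in $\QQ$ because each fiber $[p]\vee\fiberproj([p])$ lies in $U$---and this is precisely the content of the paper's terse phrase ``non-vertical Darboux motions [are excluded] by its conclusion.''

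Your alternative route through \autoref{ex:1} is not in the paper. It works, but note that it does not truly bypass the classical step you flag as the obstacle: in order to ``normalise $\mathcal{C}$ by an admissible transformation to the curve $C(t)$ of \autoref{ex:1}'' you must already know that $\mathcal{C}$ is a Darboux motion, which is exactly the trichotomy you were trying to avoid citing. So the \autoref{ex:1} argument is best viewed as a self-contained verification that \emph{among Darboux motions} only the vertical ones fit inside a C~space, not as an independent proof of the corollary.
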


\begin{proof}
  Since the map \eqref{eq:1} generically doubles the degree, the
  trajectories of the motion in question are at most quadratic and the
  motion itself is either a curvilinear translation along a curve of
  degree two or less, the rotation about a fixed axis or a Darboux
  motion. Pure translations are excluded by the assumptions of
  \autoref{th:5}, non-vertical Darboux motions by its conclusion.
\end{proof}

\begin{remark}
  \autoref{cor:1} is also valid for straight lines contained in the
  Study quadric $\SQ$ unless they intersect the exceptional generator
  $\EG$. These lines are known to describe rotations about a fixed
  axis (Selig 2005, Section~11.2) \cite{selig05}, that is, a vertical
  Darboux motion with zero amplitude. In this sense, we may say that
  \emph{all straight lines in extended kinematic image space that
    intersect the null cone in two distinct points describe a vertical
    Darboux motion.}
\end{remark}

\section{Conclusion}
\label{sec:conclusion}

We provided necessary and sufficient geometric conditions on the
transformation group induced by coordinate changes in the fixed and
moving frame and on the kinematic image of 2R, RP, and PR dyads. While
many properties of these objects are already known, sufficiency of
certain collections of conditions seems to be new. We payed proper
attention to the case of commuting joints (which is trivial for RR
dyads), highlighted the role of left and right rulings, and introduced
the fiber projectivity. It is important in our characterisation of RP
and PR spaces but also for the kinematic images of Darboux motions, as
illustrated in \autoref{ex:1} and \autoref{sec:example}.

\appendix
\section{Auxiliary results}
\label{sec:appendix}

In this appendix we prove two technical results of purely projective
nature and with no direct relations to kinematics. The formulation of
\autoref{lem:6} could be simplified but in its present form its
application in the proof of \autoref{lem:7} is apparent.

\begin{lemma}
  \label{lem:6}
  Given a three-space $E \subset P^7$ and four points $[u'_1]$,
  $[v'_1]$, $[u'_2]$, $[v'_2]$ that span a three-space $F \subset P^7$
  with $E \cap F = \varnothing$, set $U_1 \coloneqq [u'_1] \vee E$,
  $V_1 \coloneqq [v'_1] \vee E$, $U_2 \coloneqq [u'_2] \vee E$,
  $V_2 \coloneqq [v'_2] \vee E$ and consider four projections
  \begin{equation*}
    \zeta_1\colon U_1 \to V_1,\quad
    \eta_1\colon V_1 \to U_2,\quad
    \zeta_2\colon U_2 \to V_2,\quad
    \eta_2\colon V_2 \to U_1
  \end{equation*}
  with respective centres $[m_1]$, $[n_1]$, $[m_2]$, and $[n_2]$. If
  the centres are pairwise different and span a plane $L$ that is
  complementary to $U_1$, $V_1$, $U_2$, and $V_2$, respectively, the
  composition $\eta_2 \circ \zeta_2 \circ \eta_1 \circ \zeta_1$ of the
  four projections is the identity.
\end{lemma}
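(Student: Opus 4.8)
The plan is to prove the statement synthetically, by recognising each of the four point-perspectivities as a restriction of the single projection of $P^7$ onto $U_1$ with centre the plane $L$, and then collapsing the composite.

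First I would set up the objects. Since $E\cap F=\varnothing$ forces $[u'_i],[v'_i]\notin E$, each of $U_1,V_1,U_2,V_2$ is a four-space, and the join of two of them, e.g.\ $U_1\vee V_1$, is a five-space in which $V_1$ is a hyperplane; this is exactly what makes $\zeta_1\colon U_1\to V_1$ a genuine bijective perspectivity with centre a point $[m_1]\in(U_1\vee V_1)\setminus(U_1\cup V_1)$, and likewise for $\eta_1,\zeta_2,\eta_2$. Because $L$ is \emph{complementary} to the four-space $U_1$ (not merely disjoint from it), it is a plane and $L\vee U_1=P^7$; hence the projection $\pi\colon P^7\setminus L\to U_1$, $z\mapsto(L\vee z)\cap U_1$, is a well-defined morphism, and it restricts to the identity on $U_1$. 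Since $L$ is skew to $V_1$, $U_2$, and $V_2$ as well, no point of these spaces lies on $L$, so $\pi$ may be applied to every image occurring below.

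The core of the proof is the observation that if $\psi\colon A\to B$ is a perspectivity with centre $c\in L$ between four-spaces $A,B$ that are both skew to $L$, then $\pi\circ\psi=\pi|_A$: for $a\in A$ the points $a$, $c$, $\psi(a)$ are collinear and $c\in L$, so $L\vee a=L\vee\psi(a)$, and intersecting with $U_1$ yields $\pi(a)=\pi(\psi(a))$. Now $g\coloneqq\eta_2\circ\zeta_2\circ\eta_1\circ\zeta_1$ maps $U_1$ into $U_1$, on which $\pi$ acts as the identity, so $g=\pi\circ g$. Applying the observation in turn to $\eta_2$ (centre $[n_2]\in L$), then $\zeta_2$ (centre $[m_2]\in L$), then $\eta_1$ (centre $[n_1]\in L$), then $\zeta_1$ (centre $[m_1]\in L$) rewrites $\pi\circ\eta_2\circ\zeta_2\circ\eta_1\circ\zeta_1$ successively as $\pi\circ\zeta_2\circ\eta_1\circ\zeta_1$, then $\pi\circ\eta_1\circ\zeta_1$, then $\pi\circ\zeta_1$, and finally $\pi|_{U_1}=\mathrm{id}_{U_1}$. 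Hence $g=\mathrm{id}_{U_1}$.

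The step I expect to be the main obstacle is the well-definedness bookkeeping: one must use that $L$ has dimension exactly two, so that projecting from $L$ onto the four-space $U_1$ is a bijection --- this is precisely where ``complementary'' rather than merely ``disjoint'' is needed --- and one must check that every intermediate image point stays off $L$ so that $\pi$ can legitimately be applied. A more computational alternative would be to choose coordinates in which $E$ and $F$ are complementary coordinate subspaces and $[u'_1],\dots,[v'_2]$ are the four coordinate points spanning $F$; then $U_i,V_i$ are coordinate four-spaces, each perspectivity is given by an explicit matrix, the coplanarity of the four centres becomes a single linear dependence among their coordinates, and multiplying the four matrices returns the identity. I would nevertheless prefer the synthetic argument, which isolates the role of $L$ cleanly.
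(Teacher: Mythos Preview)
Your proof is correct and takes a genuinely different route from the paper's. The paper proceeds computationally: it picks an arbitrary point $[u_1]\in U_1\setminus E$, traces it through the four projections to obtain $[v_1],[u_2],[v_2]$, verifies via the dimension formula that $G=[u_1]\vee[v_1]\vee[u_2]\vee[v_2]$ is a three-space disjoint from $E$, then installs a projective frame with these four points and four further base points in $E$ (choosing the unit point so that the centres acquire the coordinates $[1,1,0,0,\ldots]$, $[0,1,1,0,\ldots]$, $[0,0,1,1,\ldots]$, $[1,0,0,1,\ldots]$), and finally checks by direct calculation that a generic point $[x_0,0,0,0,x_4,x_5,x_6,x_7]$ of $U_1$ is returned to itself. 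Your synthetic argument is cleaner and more conceptual: it isolates the single invariant of the chain---the projection $\pi$ from the centre plane $L$ onto $U_1$---and reduces everything to the one-line observation that a perspectivity with centre on $L$ commutes with $\pi$. This makes the hypothesis ``$L$ complementary to each of $U_1,V_1,U_2,V_2$'' do visible work (it is exactly what makes $\pi$ a well-defined retraction onto $U_1$ and applicable at every intermediate stage) and generalises at once to longer cycles or other ambient dimensions. The coordinate alternative you sketch at the end is essentially the paper's own proof.
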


\begin{figure}
  \centering
  \includegraphics{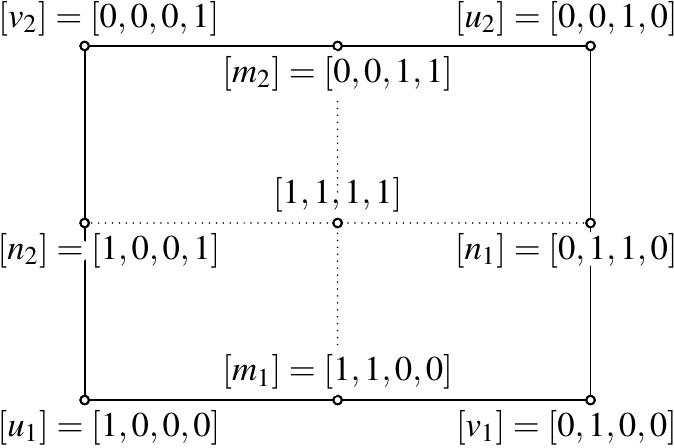}
  \caption{Projective coordinates in the proof of \autoref{lem:6}
    (four trailing zeros are omitted).}
  \label{fig:procosy}
\end{figure}

\begin{proof}
  Note that $\dim U_1 \vee V_1 = 5$ so that the projection $\zeta_1$
  (and also $\eta_1$, $\zeta_2$, $\eta_2$) from a suitable point is
  well defined. Take an arbitrary point $[u_1] \in U_1$ but not in $E$
  and set $[v_1] \coloneqq \zeta_1([u_1])$,
  $[u_2] \coloneqq \eta_1([v_1])$, $[v_2] \coloneqq \zeta_2([u_2])$.
  Then the projective space
  $G \coloneqq [u_1] \vee [v_1] \vee [u_2] \vee [v_2] = [u_1] \vee
  [m_1] \vee [n_1] \vee [m_2] \vee [n_2]$ (\autoref{fig:procosy}) is
  of dimension three and does not intersect $E$ because of the
  dimension formula
  \begin{equation*}
    \begin{aligned}
      \dim (G \cap E) &= \dim G + \dim E - \dim (G \vee E) \\
                      &= 3 + 3 - \dim ([u_1] \vee L \vee E) = 3 + 3 - 7 = -1.
    \end{aligned}
  \end{equation*}
  Therefore, we may take $[u_1]$, $[v_1]$, $[u_2]$ and $[v_2]$ in that
  order as base points of a projective coordinate system in $G$ and
  complement them by four further base points in $E$ and a suitable
  unit point to a projective coordinate system of $P^7$. We select the
  unit point such that its projection into $G$ from $E$ gives
  $([m_1] \vee [m_2]) \cap ([n_1] \vee [n_2])$. Then, the projection
  centres are
  \begin{gather*}
    [m_1] = [1,1,0,0,0,0,0,0],\quad
    [n_1] = [0,1,1,0,0,0,0,0],\\
    [m_2] = [0,0,1,1,0,0,0,0],\quad
    [n_2] = [1,0,0,1,0,0,0,0].
  \end{gather*}
  We pick an arbitrary point
  $h_2 \coloneqq [x_0,0,0,0,x_4,x_5,x_6,x_7] \in U_1$ and compute
  \begin{equation*}
    \begin{aligned}
      \zeta_1(h_2) &= [0,-x_0,0,0,x_4,x_5,x_6,x_7] \eqqcolon z_1,\\
      \eta_1(z_1) &= [0,0,x_0,0,x_4,x_5,x_6,x_7] \eqqcolon h_1,\\
      \zeta_2(h_1) &= [0,0,0,-x_0,x_4,x_5,x_6,x_7] \eqqcolon z_2,\\
      \eta_2(z_2) &= [x_0,0,0,0,x_4,x_5,x_6,x_7] = h_2.
    \end{aligned}
  \end{equation*}
  This concludes the proof.
\end{proof}

\begin{lemma}
  \label{lem:7}
  Let $E$, $F \subset P^7$ be non-intersecting three-spaces, the
  latter spanned by points $[u'_1]$, $[v'_1]$, $[u'_2]$, $[v'_2]$.
  Consider further a regular quadric $\QQ$ containing $E$ and four
  vertices $[m_1]$, $[n_1]$, $[m_2]$, $[n_2] \in \QQ$ of a 
  quadrilateral in a plane $L$ that is complementary to
  \begin{equation*}
    E \vee [u'_1],\quad
    E \vee [v'_1],\quad
    E \vee [u'_2],\quad\text{and}\quad
    E \vee [v'_2].
  \end{equation*}
  Then there exists a unique spatial quadrilateral with vertices
  $[u_1]$, $[v_1]$, $[u_2]$, $[v_2]$ that is contained in $\QQ$, whose
  sides $[u_1] \vee [v_1]$, $[v_1] \vee [u_2]$, $[u_2] \vee [v_2]$,
  $[v_2] \vee [u_1]$ are, in that order, incident with $[m_1]$,
  $[n_1]$, $[m_2]$, $[n_2]$ and whose projection from $E$ into $F$ is
  the quadrilateral with vertices $[u'_1]$, $[v'_1]$, $[u'_2]$,
  and~$[v'_2]$.
\end{lemma}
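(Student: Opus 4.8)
\emph{Proof plan.} The plan is to realise the whole quadrilateral from its single vertex $[u_1]$ and to pin $[u_1]$ down as the intersection of four hyperplanes. Write $\pi$ for the projection onto $F$ with centre $E$ and put $U_1 := E \vee [u'_1]$, $V_1 := E \vee [v'_1]$, $U_2 := E \vee [u'_2]$, $V_2 := E \vee [v'_2]$; these are four-spaces through $E$, and a point off $E$ lying in $U_1$ projects to $[u'_1]$, and so on. A structural observation comes first: $\QQ \cap U_1$ is a quadric in the four-space $U_1$ that contains the hyperplane $E$ (so its quadratic form is divisible by the linear form vanishing on $E$), hence splits into a pair of hyperplanes, and it is not the doubled hyperplane $2E$ because that would force $U_1$ into the polar $E^\perp$, which equals $E$ as $E$ is a maximal isotropic three-space of the regular quadric $\QQ$. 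Thus $\QQ \cap U_1 = E \cup E'_1$ with a three-space $E'_1 \neq E$, and likewise $\QQ \cap V_1 = E \cup E''_1$, $\QQ \cap U_2 = E \cup E'_2$, $\QQ \cap V_2 = E \cup E''_2$. Every admissible vertex lies off $E$, hence on $E'_1$, $E''_1$, $E'_2$, $E''_2$ respectively.

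Next I bring in \autoref{lem:6}. Since $L = [m_1] \vee [n_1] \vee [m_2] \vee [n_2]$ is complementary to each of $U_1, V_1, U_2, V_2$, the projections $\zeta_1\colon U_1 \to V_1$, $\eta_1\colon V_1 \to U_2$, $\zeta_2\colon U_2 \to V_2$, $\eta_2\colon V_2 \to U_1$ with centres $[m_1], [n_1], [m_2], [n_2]$ are bijective, their composition $\eta_2 \circ \zeta_2 \circ \eta_1 \circ \zeta_1$ is the identity on $U_1$, and each of them fixes $E$ pointwise (a point of $E$ already lies in the target four-space, and the line joining it to the external centre meets that four-space only there). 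Hence a quadrilateral as claimed is determined by $[u_1]$ alone: the side through $[m_1]$ forces $[v_1] = \zeta_1([u_1])$, then $[u_2] = \eta_1([v_1])$, $[v_2] = \zeta_2([u_2])$, and the side through $[n_2]$ closes up automatically because $\eta_2([v_2]) = [u_1]$.

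Now I locate $[u_1]$. I ask for $[u_1] \in \QQ \cap U_1$ with $\zeta_1([u_1]) \in \QQ$, $\eta_1\zeta_1([u_1]) \in \QQ$, and $\zeta_2\eta_1\zeta_1([u_1]) \in \QQ$. Each of these is a quadratic condition on $[u_1] \in U_1 \cong P^4$, but because $\zeta_1$ carries $E$ into $\QQ$ the first reads $[u_1] \in E \cup \zeta_1^{-1}(E''_1)$, and the other two similarly, the four relevant hyperplanes being $E'_1$, $\zeta_1^{-1}(E''_1)$, $(\eta_1\zeta_1)^{-1}(E'_2)$, $(\zeta_2\eta_1\zeta_1)^{-1}(E''_2)$ --- each of them, one checks, different from $E$. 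So the solution locus in $U_1$ equals $E$ together with the intersection of these four hyperplanes, which --- provided they are in general position --- is a single \emph{real} point $[u_1]$ (a genericity check gives $[u_1] \notin E$). Setting $[v_1] := \zeta_1([u_1]) \in E''_1$, $[u_2] := \eta_1([v_1]) \in E'_2$, $[v_2] := \zeta_2([u_2]) \in E''_2$, all off $E$ (a routine consequence of the complementarity of $L$), the four vertices project onto $[u'_1], [v'_1], [u'_2], [v'_2]$. Each of the first three sides joins two of these vertices through a centre and so carries three distinct points of $\QQ$, hence lies on $\QQ$; and since $\eta_2([v_2]) = [u_1]$ by \autoref{lem:6}, the points $[u_1], [v_2], [n_2]$ are three distinct collinear points of $\QQ$, so the fourth side $[v_2] \vee [u_1]$, which passes through $[n_2]$, lies on $\QQ$ as well. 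This produces a quadrilateral with the required incidences, and any admissible quadrilateral has the same vertex $[u_1]$, hence coincides with it --- whence uniqueness.

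The step I expect to be the real obstacle is exactly the genericity used above: that the four hyperplanes $E'_1$, $\zeta_1^{-1}(E''_1)$, $(\eta_1\zeta_1)^{-1}(E'_2)$, $(\zeta_2\eta_1\zeta_1)^{-1}(E''_2)$ in $U_1$ are in general position and that their common point lies off $E$. This is where all the hypotheses must be brought to bear simultaneously --- complementarity of $L$ to the four four-spaces, non-degeneracy of the planar quadrilateral $[m_1] \vee [n_1] \vee [m_2] \vee [n_2]$, non-degeneracy of $[u'_1] \vee [v'_1] \vee [u'_2] \vee [v'_2]$ in $F$, and regularity of $\QQ$; I would settle it in the coordinate frame of \autoref{lem:6} (or a variant adapted to $E$, $F$, $\QQ$, in which $\QQ$ takes an almost standard shape), where the verification is routine though a little lengthy. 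Finally, distinctness of the four vertices --- needed both for ``$[u_1], [v_2], [n_2]$ distinct'' and to obtain four genuinely different sides --- follows from $U_i \cap V_j = E$, itself a consequence of the non-degeneracy of the quadrilateral $[u'_1][v'_1][u'_2][v'_2]$ in $F$.
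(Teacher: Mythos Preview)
Your approach is the paper's: both reduce, via \autoref{lem:6}, to the single vertex $[u_1]$ and then impose the four ``vertex on $\QQ$'' conditions, which you correctly recognise as linear (your hyperplanes $E'_1$, $\zeta_1^{-1}(E''_1)$, $(\eta_1\zeta_1)^{-1}(E'_2)$, $(\zeta_2\eta_1\zeta_1)^{-1}(E''_2)$ are exactly the four equations the paper writes down). The only difference is that you defer the general-position check, whereas in the paper this is the heart of the argument and is short rather than ``a little lengthy''. In coordinates with base points $[u'_1],[v'_1],[u'_2],[v'_2]\in F$ and four further base points in $E$, the matrix of $\QQ$ has block form $\left[\begin{smallmatrix}A&B\\B^{\tp}&O\end{smallmatrix}\right]$ (the lower-right block vanishes because $E\subset\QQ$); writing $[u_1]=[1,0,0,0,x_0,x_1,x_2,x_3]$ and expressing $v_1,u_2,v_2$ through the projections, the four conditions $\omega(u_1,u_1)=\omega(v_1,v_1)=\omega(u_2,u_2)=\omega(v_2,v_2)=0$ become an affine linear system in $(x_0,x_1,x_2,x_3)$ whose coefficient matrix is exactly~$B$. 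Regularity of $\QQ$ forces $B$ to be invertible, so your four hyperplanes are automatically in general position and their unique common point lies in the affine chart, hence off~$E$; no separate case analysis is needed.
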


\begin{proof}
  Denote the quadratic form associated to $\QQ$ by $\qf$. Given
  $[u'_1]$, $[v'_1]$, $[u'_2]$, $[v'_2]$, we have to reconstruct
  $[u_1]$, $[v_1]$, $[u_2]$, $[v_2]$ subject to the constraints
  \begin{equation}
    \label{eq:12}
    \qf(u_i,u_i) = \qf(v_i,v_i) = \qf(u_i,v_j) = 0, \quad i,j \in \{1,2\}
  \end{equation}
  and
  \begin{equation}
    \label{eq:13}
    \begin{gathered}
      [m_1] \in [u_1] \vee [v_1],\quad
      [n_1] \in [v_1] \vee [u_2],\\
      [m_2] \in [u_2] \vee [v_2],\quad
      [n_2] \in [v_2] \vee [u_1].
    \end{gathered}
  \end{equation}
  If $[u_1]$ is given, we find $[v_1]$ by projecting $[u_1]$ from
  centre $[m_1]$ onto $E \vee [v'_1]$. \autoref{lem:6} tells us that
  we can find $[u_2]$ and $[v_2]$ in similar manner such that
  \eqref{eq:13} is satisfied. Then, some of the conditions in
  \eqref{eq:12} become redundant and it is sufficient to consider only
  \begin{equation}
    \label{eq:14}
    \qf(u_1,u_1) = \qf(v_1,v_1) = \qf(u_2,u_2) = \qf(v_2,v_2) = 0.
  \end{equation}
  In a projective coordinate system with base points $[u'_1]$,
  $[v'_1]$, $[u'_2]$, $[v'_2] \in F$ and further base points in $E$,
  the quadratic form $\qf$ of the quadric $\QQ$ is described by a
  matrix of the shape
  \begin{equation*}
    \begin{bmatrix}
      A & B \\
      B & O
    \end{bmatrix}
  \end{equation*}
  where $A$, $B$ and $O$ are matrices of dimension $4 \times 4$, $O$
  is the zero matrix and $B$ is regular. Now \eqref{eq:14} gives rise
  to a linear system for the unknown coordinates of $[u_1]$, $[v_1]$,
  $[u_2]$, and $[v_2]$. We have
  \begin{equation*}
    \begin{aligned}
      v_1 &= u_1 + \zeta_1 m_1,\\
      u_2 &= v_1 + \eta_1 n_1 = u_1 + \zeta_1 m_1 + \eta_1 n_1,\\
      v_2 &= u_2 + \zeta_2 m_2 = u_1 + \zeta_1 m_1 + \eta_1 n_1 + \zeta_2 m_2
    \end{aligned}
  \end{equation*}
  with certain scalars $\zeta_1$, $\eta_1$, $\zeta_2$ that can be
  computed from $u'_1$, $v'_1$, $u'_2$, $v'_2$ and $m_1$, $n_1$,
  $m_2$, $n_2$ alone. Now we write $[u_1] = [1,0,0,0,x_0,x_1x_2,x_3]$
  with unknown $x_0$, $x_1$, $x_2$, $x_3$. Then, \eqref{eq:14} yields
  \begin{equation}
    \label{eq:15}
    \begin{aligned}
      0 &= [1,0,0,0] \cdot B [x_0,x_1,x_2,x_3]^\tp,\\
      0 &= [0,1,0,0] \cdot B ([x_0,x_1,x_2,x_3] + \zeta_1m''_1)^\tp,\\
      0 &= [0,0,1,0] \cdot B ([x_0,x_1,x_2,x_3] + \zeta_1m''_1 + \eta_1 n''_1)^\tp,\\
      0 &= [0,0,0,1] \cdot B ([x_0,x_1,x_2,x_3] + \zeta_1m''_1 +
      \eta_1 n''_1 + \zeta_2m''_2)^\tp
    \end{aligned}
  \end{equation}
  where the double prime denotes projection on the last four
  coordinates. \eqref{eq:15} is a linear system for $x_0$, $x_1$,
  $x_2$, $x_3$ with regular coefficient matrix $B$. Hence, the
  solution is unique.
\end{proof}

\section*{Acknowledgement}
\label{sec:acknowledgment}

This work was supported by the Austrian Science Fund (FWF): P\;26607
(Algebraic Methods in Kinematics: Motion Factorisation and Bond
Theory).



\end{document}